\documentclass{article}

\PassOptionsToPackage{numbers, compress}{natbib}
\PassOptionsToPackage{colorlinks=true, linkcolor=blue, pdfborder={0 0 0}}{hyperref}

\usepackage[final]{neurips_2022}




\usepackage[utf8]{inputenc} 
\usepackage[T1]{fontenc}    
\usepackage{hyperref}       
\usepackage{url}            
\usepackage{booktabs}       
\usepackage{amsfonts}       
\usepackage{nicefrac}       
\usepackage{microtype}      
\usepackage[usenames,dvipsnames]{xcolor}         

\usepackage{xspace}

\usepackage{mathtools}
\usepackage{amsmath}
\usepackage{amssymb}
\usepackage{amsthm}
\usepackage{thmtools}
\usepackage[ruled,linesnumbered]{algorithm2e}

\usepackage{subfig}
\usepackage{float}
\usepackage{wrapfig}
\restylefloat{figure}


\newcommand*{\argmax}{\operatornamewithlimits{argmax}}
\newcommand*{\argmin}{\operatornamewithlimits{argmin}}

\newcommand*{\field}[1]{\mathbb{\MakeUppercase{#1}}}		
\newcommand*{\set}[1]{{\mathcal{\MakeUppercase{#1}}}}			
\newcommand*{\norm}[1]{\lVert #1 \rVert}	

\newcommand*{\inner}[1]{\langle #1 \rangle}	
\newcommand*{\collection}[1]{{\mathfrak{\MakeUppercase{#1}}}} 

\newcommand*{\functional}[1]{{\MakeUppercase{#1}}}

\newcommand*{\R}{\field{R}} 

\renewcommand{\vec}[1]{{\boldsymbol{\mathbf{#1}}}}
\newcommand*{\mat}[1]{\vec{\MakeUppercase{#1}}}
\newcommand*{\eye}{\mat{I}}							
\newcommand*{\transpose}{\mathsf{T}}

\newcommand*{\dataset}{\set{D}} 				
\newcommand*{\observation}{y} 					
\newcommand*{\observations}{\vec{\observation}}
\newcommand*{\obsNoise}{\epsilon}

\newcommand*{\gp}{\mathcal{GP}}
\newcommand*{\gpMean}{\mu}
\newcommand*{\gpMeanFunction}{m}
\newcommand*{\mutinfo}[1]{I(#1)}				
\newcommand*{\gpfunction}{h}
\newcommand*{\mig}{\xi}
\newcommand*{\gpnoise}{\nu}


\newcommand*{\slocation}{x} 
\newcommand*{\location}{\vec{\slocation}} 
\newcommand*{\locDomain}{\set{X}} 

\newcommand*{\dimension}{d}
\newcommand*{\locDim}{\dimension}


\newcommand*{\expectation}{\mathbb{E}}

\newcommand*{\Pspace}{\set{P}} 					

\newcommand*{\kl}[2]{D_\mathrm{KL}(#1||#2)}
\newcommand*{\normal}{\mathcal{N}}					
\newcommand*{\uniform}{\mathcal{U}}

\newcommand*{\indicator}{\mathbb{I}} 				
\newcommand*{\diff}{{\mathop{}\operatorname{d}}}



\newcommand*{\filtration}{\collection{F}}

\newcommand*{\cdf}{\Phi}
\newcommand*{\variational}{q}

\newcommand*{\Hspace}{\set{H}} 						

\newcommand*{\regFactor}{\lambda}


\newcommand*{\af}{a} 						
\newcommand*{\regret}{r}					
\newcommand*{\Regret}{\functional{R}}					
\newcommand*{\uncertain}[1]{\bar{#1}}
\newcommand*{\eregret}{\uncertain{\regret}}	
\newcommand*{\eRegret}{\uncertain{\Regret}}	

\newcommand*{\objective}{f}

\newcommand*{\quantile}{\tau}
\newcommand*{\percentile}{\gamma}
\newcommand*{\lowerpdf}{\ell}
\newcommand*{\greaterpdf}{g}
\newcommand*{\classifier}{\pi}
\newcommand*{\clabel}{z}
\newcommand*{\ratio}{\rho}
\newcommand*{\loss}{\mathcal{L}}

\newcommand*{\rate}{\alpha}
\newcommand*{\sgrad}{\vec{\zeta}}

\newcommand*{\iterIdx}{t}

\newcommand*{\primIdx}{i}	

\newcommand*{\nIterations}{T}
\newcommand*{\nObs}{N}

\newcommand*{\nSamples}{M}

\newcommand*{\nFeatures}{F}


\newcommand*{\parameters}{\vec{\theta}} 


\newcommand*{\Lipschitz}{L}	
\newcommand*{\bound}{b}		


\newcommand*{\normalisation}{\eta}

\newcommand*{\anyscalar}{s}
\newcommand*{\anyfunction}{h}



\newcommand*{\iid}{i.i.d.\xspace}

\providecommand{\varitem}{} 
\makeatletter

\makeatother

\declaretheorem[name=Theorem]{theorem}

\declaretheorem[name=Lemma]{lemma}

\title{Batch Bayesian optimisation via density-ratio estimation with guarantees}

%

\author{%
	Rafael Oliveira$^{1,2}$\thanks{Corresponding author.}\\
	\texttt{rafael.oliveira@sydney.edu.au}\\
	 \And
	 Louis C. Tiao$^3$ \\
	 \texttt{louis.tiao@sydney.edu.au}\\
	 \AND
	 Fabio Ramos$^{3,4}$\\
	 \texttt{fabio.ramos@sydney.edu.au}\\
	 \And
	 \textnormal{$^1$Brain and Mind Centre, the University of Sydney, Australia}\\
	 $^2$ARC Training Centre in Data Analytics for Resources and Environments, Australia\\
	 $^3$School of Computer Science, the University of Sydney, Australia\\
	 $^4$NVIDIA, USA
}

\begin{document}

	\maketitle	
	
	\begin{abstract}
	Bayesian optimisation (BO) algorithms have shown remarkable success in applications involving expensive black-box functions. Traditionally BO has been set as a sequential decision-making process which estimates the utility of query points via an acquisition function and a prior over functions, such as a Gaussian process. Recently, however, a reformulation of BO via density-ratio estimation (BORE) allowed reinterpreting the acquisition function as a probabilistic binary classifier, removing the need for an explicit prior over functions and increasing scalability. In this paper, we present a theoretical analysis of BORE's regret and an extension of the algorithm with improved uncertainty estimates. We also show that BORE can be naturally extended to a batch optimisation setting by recasting the problem as approximate Bayesian inference. The resulting algorithms come equipped with theoretical performance guarantees and are assessed against other batch and sequential BO baselines in a series of experiments.
\end{abstract}

\section{Introduction}
Bayesian optimisation (BO) algorithms provide flexible black-box optimisers for problems involving functions which are noisy or expensive to evaluate \citep{Shahriari2016}. Typical BO approaches place a probabilistic model over the objective function which is updated with every new observation in a sequential decision-making process. Most methods are based on Gaussian process (GP) surrogates \citep{Rasmussen2006}, which provide closed-form analytic expressions for the model's posterior distribution and allow for a number of theoretical performance guarantees \citep{Srinivas2010, Bull2011, Wang2018meta}. However, GP surrogates have a number of limitations, such as not easily scaling to high-dimensional domains, high computational complexity and requiring a careful choice of covariance function and hyper-parameters \citep{Rasmussen2006}. Non-GP-based BO methods have also been proposed in the literature, such as BO methods based on neural networks \citep{Snoek2015, Springenberg2016} and random forests \citep{Hutter2011smac} regression models.

As an alternative to improving the model, \citet{tiao2021bore} focus on the acquisition function, which in BO frameworks represents the guide that takes the model predictions into account. They show that one can derive the acquisition function directly without an implicit model by reinterpreting the expected improvement \citep{Bull2011, Shahriari2016} via a density-ratio estimation problem. Applying this perspective, the acquisition function can then be derived as a classification model, which can be represented by flexible parametric models, such as deep neural networks, and efficiently trained via stochastic gradient descent. The resulting method, called \emph{Bayesian optimisation via density-ratio estimation} (BORE) is then shown to outperform a variety of traditional GP-based and non-GP baselines.

Despite the significant performance gains, BORE has only been applied to a sequential setting and not much is known about the method's theoretical guarantees. Batch BO methods have the potential to speed up optimisation in settings where multiple queries to the objective function can be evaluated simultaneously \citep{Snoek2012, Gonzalez2016batch, Wang2017batch, Wilson2018}. Given its flexibility to apply models which can scale to large datasets, it is therefore a natural question as to whether BORE can be readily extended to the batch setting in a computationally efficient way.

In this paper, we extend the BORE framework to the batch setting and analyse its theoretical performance. To derive theoretical guarantees, we first show that the original BORE can be improved by accounting for uncertainty in the classifier's predictions. We then propose a novel method, called BORE++, which uses an upper confidence bound over the classifier's predictions as its acquisition function. The method comes equipped with guarantees in the probabilistic least-squares setting. We provide extensions for both BORE and BORE++ to the batch setting. Lastly, we present experimental results demonstrating the performance of the proposed algorithms in practical optimisation problems. 

\section{Background}
We consider a global optimisation problem over a compact search space $\locDomain\subset\R^\locDim$ of the form:
\begin{equation}
	\location^* \in \argmin_{\location\in\locDomain} \objective(\location)\,,
	\label{eq:problem}
\end{equation}
where  $\objective:\locDomain\to\R$ is assumed to be a black-box objective function, i.e., we have no access to gradients nor analytic formulations of it. In addition, we are only allowed to run up to $\nIterations$ rounds of function evaluations, where we might collect single points or batches of observations $\observation_\iterIdx := \objective(\location_\iterIdx) + \obsNoise_\iterIdx$, which are corrupted by additive noise $\obsNoise_\iterIdx$, for $\iterIdx\in\{1,\dots,\nIterations\}$.

\subsection{Bayesian optimisation}
Bayesian optimisation (BO) algorithms approach the problem in \autoref{eq:problem} via sequential decision making \citep{Shahriari2016}. At each iteration, BO selects a query point by maximising an acquisition function $\af$:
\begin{equation}
	\location_\iterIdx \in \argmax_{\location\in\locDomain} \af(\location|\dataset_{\iterIdx-1})
\end{equation}
The acquisition function encodes information provided by the observations collected so far $\dataset_\iterIdx := \{\location_i, \observation_i\}_{i=1}^{\iterIdx-1}$ using a probabilistic model over $\objective$, typically a Gaussian process (GP) \citep{Rasmussen2006}, conditioned on the data. After collecting an observation $\observation_\iterIdx$, the dataset is updated with the new query-observation pair $\dataset_\iterIdx := \dataset_{\iterIdx-1} \cup {\location_\iterIdx, \observation_\iterIdx}$. This process then repeats for a given number of iterations $\nIterations$.


\subsection{Bayesian optimisation via density-ratio estimation (BORE)}
\label{sec:bore}
The expected improvement (EI) \citep{Bull2011, Jones1998} is a popular acquisition function in the BO literature and the basis for many BO algorithms. At each iteration $\iterIdx \geq 1$, one can define $\quantile := \min_{\primIdx<\iterIdx} \observation_\iterIdx$ as an incumbent target. EI is then defined as:
\begin{equation}
	\af_{\operatorname{EI}}(\location|\dataset_{\iterIdx-1}) := \expectation[\max\left\{0,\quantile - f(\location)\right\}|\dataset_{\iterIdx-1}]~.
	\label{eq:ei}
\end{equation}
In the case of a GP prior on $f|\dataset_{\iterIdx-1}\sim\gp(\gpMean_{\iterIdx-1}, k_{\iterIdx-1})$, the EI is available in closed form as a function of the GP posterior. However, the EI may be reformulated without the need for a prior.

Under mild assumptions, \citet{bergstra2011algorithms} showed that the EI can be formulated as a density ratio between two probability distributions. Let $\lowerpdf(\location) := p(\location|\observation\leq\quantile)$ represent the probability density over  $\location\in\locDomain$ conditioned on the observation $\observation$ being below a threshold $\quantile\in\R$. Conversely, let $\greaterpdf(\location) := p(\location|\observation > \quantile)$. For $\percentile\in[0,1]$, the $\percentile$-relative density ratio between these two densities is:
\begin{equation}
	\ratio_\percentile(\location) := \frac{\lowerpdf(\location)}{\percentile\lowerpdf(\location) + (1-\percentile)\greaterpdf(\location)}\,, \quad \location\in\locDomain\,,
\end{equation}
noting that $\percentile = 0$ leads to the ordinary probability density ratio definition, $\ratio_0(\location) = \nicefrac{\lowerpdf(\location)}{\greaterpdf(\location)}$. Now if we choose $\quantile := \cdf^{-1}(\percentile)$, where $\cdf(\anyscalar) := p(\observation \leq \anyscalar)$ represents the cumulative distribution function of the marginal distribution of observations,\footnote{Note that $p(\observation \leq \anyscalar) = \int_{\locDomain} p(\observation\leq\anyscalar | \location)p(\location)\diff\location$, where we may assume $p(\location)$ uniform.} for $\anyscalar\in\R$, and then replace $\quantile$ in \autoref{eq:ei}, \citet{bergstra2011algorithms} have shown that\footnote{\citet{bergstra2011algorithms} and \citet{tiao2021bore} also rely on the mild assumption that $p(\location|\observation) \approx \lowerpdf(\location)$ for all $\observation\leq\quantile$.} $\af_{\operatorname{EI}}(\location) \propto \ratio_\percentile(\location)$, for $\location\in\locDomain$. Based on this fact, \citet{tiao2021bore} showed:
\begin{equation}
	\af_{\operatorname{EI}}(\location) \propto \ratio_\percentile(\location) = \percentile^{-1}\classifier(\location), \quad \location\in\locDomain,
\end{equation}
where $\classifier(\location) := p(\observation\leq \quantile |\location)$ can be approximated by a probabilistic classifier trained with a proper scoring rule, such as the binary cross-entropy loss:
\begin{equation}
	\loss_\iterIdx[\classifier] := \sum_{i=1}^\iterIdx \clabel_i\log \classifier(\location_i) + (1-\clabel_i)\log(1-\classifier(\location_i))\,.
\end{equation}
Other examples of proper scoring rules include the least-squares loss, which leads to probabilistic least-squares classifiers \citep{Sugiyama2012}, and the zero-one loss. We refer the reader to \citet{gneiting2007strictly} for a review and theoretical analysis on this topic.

\begin{algorithm}[t]
	\caption{BORE}
	\label{alg:bore}
	\DontPrintSemicolon
	\For{$\iterIdx\in\{1,\dots,\nIterations\}$}
	{
		$\quantile := \hat\cdf_{\iterIdx-1}^{-1}(\percentile)$\;
		$\clabel_i := \indicator[\observation_i \leq \quantile], \quad i\in \{1,\dots,\iterIdx-1\}$\;
		$\tilde\dataset_{\iterIdx-1} := \{\location_i, \clabel_i\}_{i=1}^{\iterIdx-1}$\;
		$\hat\classifier_\iterIdx \in \argmin_\classifier \loss[\classifier|\tilde\dataset_{\iterIdx-1}]$\;
		$\location_\iterIdx \in \argmax_{\location\in\locDomain} \hat\classifier_{\iterIdx-1}(\location)$\;
		$\observation_\iterIdx := \objective(\location_\iterIdx) + \obsNoise_\iterIdx$\;
	}
\end{algorithm}

BORE is summarised in \autoref{alg:bore}. As seen, the marginal observations distribution CDF $\cdf(\anyscalar) := p(\observation\leq \anyscalar)$ is replaced by the empirical approximation $\hat\cdf_\iterIdx(\anyscalar) := \frac{1}{\iterIdx}\sum_{i=1}^\iterIdx \indicator[\observation_i\leq \anyscalar]$ and its corresponding quantile function $\hat\cdf_\iterIdx^{-1}$. At each iteration, observations are labelled according to the estimated $\percentile$th quantile $\quantile$, and a classifier $\hat\classifier_\iterIdx$ is trained by minimising the loss $\loss[\classifier|\tilde\dataset_\iterIdx]$ over the data points $\tilde\dataset_\iterIdx$. A query point $\location_\iterIdx$ is chosen by maximising the classifier's probabilities, which in our case corresponds to maximising the expected improvement. A new observation is collected, and the algorithm continues running up to a given number of iterations $\nIterations$. As demonstrated, no explicit probabilistic model for $\objective$ is needed, only a classifier, which can be efficiently trained via, e.g., stochastic gradient descent.

\section{Analysis of the BORE framework}
In this section, we analyse limitations of the BORE framework in modelling uncertainty and analyse its effects on the algorithm's performance. As presented in \autoref{sec:bore}, at each iteration $\iterIdx\geq 1$, the original BORE framework trains a probabilistic classifier $\hat\classifier_\iterIdx(\location)$ to approximate $p(\observation\leq\quantile|\location)$, where $\quantile$ denotes the $\percentile$th quantile of the marginal observations distribution, i.e., $p(\observation\leq\quantile) = \gamma$. This approach leads to a maximum likelihood estimate for the classifier $\hat\classifier$, which may not properly account for the uncertainty in the classifier's approximation. 

Since BORE is based on probabilistic classifiers, instead of regression models as in traditional BO frameworks \citep{Shahriari2016}, a natural first question to ask is whether a classifier can guide it to the global optimum of the objective function. The following lemma answers this question and is a basis for our analysis.

\begin{lemma}
	\label{thr:classifier-optimum}
	Let $\objective:\locDomain\to\R$ be a continuous function over a compact space $\locDomain$. Assume that, for any $\location\in\locDomain$, we observe $\observation = \objective(\location) + \obsNoise$, where $\obsNoise$ is \iid noise with a strictly monotonic cumulative distribution function $\cdf_\obsNoise:\R\to[0,1]$. Then, for any $\quantile\in\R$, we have:
	\begin{equation}
		\argmax_{\location\in\locDomain} p(\observation \leq \quantile|\location, \objective) = \argmin_{\location\in\locDomain} f(\location)\,.
	\end{equation}
\end{lemma}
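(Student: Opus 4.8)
The plan is to show that maximising $p(\observation\leq\quantile|\location,\objective)$ is equivalent to minimising $\objective(\location)$ by expressing the conditional probability explicitly in terms of the noise CDF. Since $\observation = \objective(\location) + \obsNoise$, conditioning on $\location$ and $\objective$ fixes the value $\objective(\location)$, so that $\observation\leq\quantile$ is equivalent to $\obsNoise\leq\quantile-\objective(\location)$. Because $\obsNoise$ is \iid with CDF $\cdf_\obsNoise$, this gives the closed form
\begin{equation}
	p(\observation\leq\quantile|\location,\objective) = \cdf_\obsNoise(\quantile-\objective(\location))\,.
\end{equation}
First I would establish this identity carefully, noting that the independence of the noise from $\location$ is what lets us substitute the deterministic offset $\objective(\location)$ into the noise distribution.

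The next step is to exploit monotonicity. Since $\cdf_\obsNoise$ is strictly monotonic (and as a CDF, nondecreasing, hence strictly increasing), the map $\anyscalar\mapsto\cdf_\obsNoise(\quantile-\anyscalar)$ is strictly decreasing in $\anyscalar$. Therefore $\cdf_\obsNoise(\quantile-\objective(\location))$ is maximised in $\location$ exactly when $\objective(\location)$ is minimised. This yields
\begin{equation}
	\argmax_{\location\in\locDomain}\cdf_\obsNoise(\quantile-\objective(\location)) = \argmin_{\location\in\locDomain}\objective(\location)\,,
\end{equation}
which combined with the identity above gives the claim. Formally I would invoke that a strictly decreasing transformation preserves argmin/argmax sets by reversing the order: $\location_1$ achieves a larger value of the composite iff it achieves a smaller value of $\objective$.

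The role of the remaining hypotheses is to guarantee these argmin/argmax sets are well defined and nonempty, which I would mention for completeness: $\objective$ continuous on the compact domain $\locDomain$ attains its minimum (extreme value theorem), so the right-hand set is nonempty, and the strict monotonicity transfers this to the left-hand side. I do not expect a genuine obstacle here; the only point requiring care is the first step, namely justifying the substitution $p(\observation\leq\quantile|\location,\objective)=\cdf_\obsNoise(\quantile-\objective(\location))$ rigorously, since it hinges on the noise being independent of the query location so that the conditional law of $\obsNoise$ coincides with its marginal. Once that identity is in place, the rest is an immediate consequence of strict monotonicity.
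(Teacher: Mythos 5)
Your proposal is correct and follows essentially the same route as the paper: rewrite $p(\observation\leq\quantile|\location,\objective)=\cdf_\obsNoise(\quantile-\objective(\location))$ and invoke strict monotonicity of $\cdf_\obsNoise$ to convert the $\argmax$ into an $\argmin$. The paper's proof is just a terser version of this; your added remarks on noise independence and the extreme value theorem are fine but not needed beyond what the paper states.
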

\begin{proof}
	As the observation noise CDF is monotonic, by basic properties of the $\argmax$, we have:
	\begin{equation}
		\begin{split}
			\argmax_{\location\in\locDomain} p(\observation \leq \quantile|\location, \objective)
			= \argmax_{\location\in\locDomain} \cdf_\obsNoise(\quantile-\objective(\location)) = \argmin_{\location\in\locDomain} \objective(\location)\,,
		\end{split}
	\end{equation}
	which concludes the proof.
\end{proof}

According to this lemma, maximising class probabilities is equivalent to optimising the objective function when the classifier is optimal, i.e., it has perfect knowledge of $\objective$. This result holds for any given threshold $\quantile\in\R$. We only make a mild assumption on the CDF of the observation noise $\cdf_\obsNoise$, which is satisfied for any probability distribution with support covering the real line (e.g. Gaussian, Student-T, Cauchy, etc.).\footnote{This result could also be easily extended to distributions with bounded support as long as their CDF is monotonic within it. However, we keep the support as $\R$ for simplicity, and the extension is left for future work.}

To analyse BORE's optimisation performance, we will aim to bound the algorithm's instant regret:
\begin{equation}
	\regret_\iterIdx := \objective(\location_\iterIdx) - \objective(\location^*), \quad \iterIdx\geq 1,
\end{equation}
and its cumulative version $\Regret_\nIterations := \sum_{\iterIdx=1}^\nIterations \regret_\iterIdx$. Sub-linear bounds on $\Regret_\nIterations$ lead to a no-regret algorithm, since $\lim_{\nIterations\to\infty}\frac{\Regret_\nIterations}{\nIterations} = 0$ and $\min_{\iterIdx \leq \nIterations} \regret_\iterIdx \leq \frac{\Regret_\nIterations}{\nIterations}$.

Assuming that there is an optimal classifier $\classifier^*:\locDomain\to[0,1]$, which is such that $\classifier^*(\location) = p(\observation \leq \quantile|\location, \objective)$, for a given $\quantile\in\R$, we can directly relate the classifier probabilities to the objective function $\objective$ values, since:
\begin{equation}
	\begin{split}
		\classifier^*(\location) = p(\observation\leq \quantile|\location, \objective) = \cdf_\obsNoise(\quantile-\objective(\location))		\quad\therefore\quad \objective(\location) = \quantile - \cdf_\obsNoise^{-1}(\classifier^*(\location))\,.
	\end{split}
\end{equation}
The existence of the inverse $\cdf_\obsNoise^{-1}$ is ensured by the strict monotonicity assumption on $\cdf_\obsNoise$ in \autoref{thr:classifier-optimum}. Under this observation, the algorithm's regret at any iteration $\iterIdx\geq 1$ can be bounded in terms of classifier probabilities:
\begin{equation}
	\begin{split}
		\regret_\iterIdx &= \objective(\location_\iterIdx) - \objective(\location^*) = \cdf_\obsNoise^{-1}(\classifier^*(\location^*)) - \cdf_\obsNoise^{-1}(\classifier^*(\location_\iterIdx)) \leq \Lipschitz_\obsNoise (\classifier^*(\location^*) - \classifier^*(\location_\iterIdx))\,,
	\end{split}
	\label{eq:optimal-classifier-regret}
\end{equation}
where $\Lipschitz_\obsNoise$ is any Lipschitz constant for $\cdf_\obsNoise^{-1}$, which exists since $\locDomain$ is compact. Therefore, we should be able to bound BORE's regret by analysing the approximation error for $\hat\classifier_\iterIdx$ at each iteration $\iterIdx\geq 1$. 

Although approximation guarantees for classification algorithms under \iid data settings are well known \citep{Barron1994approximation}, 
each observation in BORE depends on the previous ones via the acquisition function. This process is also not necessarily stationary, so that we cannot apply known results for classifiers under stationary processes \citep{Steinwart2009}. In the next section, we consider a particular setting for learning a classifier which allows us to bound the prediction error under BORE's data-generating process.

\subsection{Probabilistic least-squares classifiers}
\label{sec:pls}
We consider the case of probabilistic least-squares (PLS) classifiers \citep{Selten1998, Suykens1999}. In particular, we model a probabilistic classifier $\classifier:\locDomain\to[0,1]$ as an element of a reproducing kernel Hilbert space (RKHS) $\Hspace$ associated with a positive-definite kernel $k:\locDomain\times\locDomain\to\R$. A RKHS is a space of functions equipped with inner product $\inner{\cdot, \cdot}_k$ and norm $\norm{\cdot}_k := \sqrt{\inner{\cdot,\cdot}_k}$  \citep{Scholkopf2002}. For the purposes of this analysis, we will also assume that $k(\location,\location) \leq 1$, for all $\locDomain$.\footnote{This assumption can always be satisfied by proper scaling.}
This setting allows for both linear and non-parametric models. Gaussian assumptions on the function space would lead us to GP-based PLS classifiers \citep{Rasmussen2006}, but we are not restricted by Gaussianity in our analysis. If the kernel $k$ is universal, as $\cdf_\obsNoise$ is injective, we can also see that the RKHS assumption allows for modelling any continuous function.

For a given $\quantile\in\R$, a PLS classifier is obtained by minimising the regularised squared-error loss:
\begin{equation}
	\hat\classifier_\iterIdx \in \argmin_{\classifier\in\Hspace} \sum_{i=1}^\iterIdx (\clabel_i - \classifier(\location_i))^2 + \regFactor\norm{\classifier}_k^2\,,\quad \iterIdx\geq 1,
	\label{eq:pls-problem}
\end{equation}
where $\regFactor > 0$ is a given regularisation factor and $\clabel_i := \indicator[\observation_i\leq \quantile] \in \{0,1\}$. In the RKHS case, the solution to the problem above is available in closed form \citep{Abbasi-Yadkori2010, Sugiyama2012} as:
\begin{equation}
	\hat\classifier_\iterIdx(\location) = \vec k_\iterIdx(\location)^\transpose (\mat k_\iterIdx + \regFactor\eye)^{-1}\vec\clabel_\iterIdx\,, \quad \location\in\locDomain,\, \iterIdx\geq 1,
	\label{eq:pls-estimator}
\end{equation}
where $\vec k_\iterIdx(\location) := [k(\location, \location_1), \dots, k(\location,\location_\iterIdx)]^\transpose \in \R^\iterIdx$, $\mat k_\iterIdx := [k(\location_i, \location_j)]_{i,j=1}^\iterIdx \in \R^{\iterIdx\times\iterIdx}$ and $\vec\clabel_\iterIdx := [\clabel_1,\dots,\clabel_\iterIdx]^\transpose \in \R^\iterIdx$. This PLS approximation  may not yield a valid classifier, since it is possible that $\hat\classifier_\iterIdx(\location) \notin [0,1]$ for some $\location\in\locDomain$. However, it allows us to place a confidence interval on the optimal classifier's prediction, as presented in the following theorem, which is based on theoretical results from the online learning literature \citep{Abbasi-Yadkori2012, Durand2018}. Our proofs can be found in the supplement.
\begin{theorem}
	\label{thr:pls-ucb}
	Given $\quantile\in\R$, assume $\classifier(\location) := \cdf_\obsNoise(\quantile-\objective(\location))$ is such that $\classifier\in\Hspace$, {and $\norm{\classifier}_k \leq \bound$}. Let $\{\location_\iterIdx\}_{\iterIdx=1}^\infty$ be a $\locDomain$-valued discrete-time stochastic process predictable with respect to the filtration $\{\filtration_\iterIdx\}_{\iterIdx=0}^\infty$. Let $\{\clabel_\iterIdx\}_{\iterIdx=1}^\infty$ be a real-valued stochastic process such that $\gpnoise_\iterIdx := \clabel_\iterIdx - \classifier(\location_\iterIdx)$ is $1$-sub-Gaussian conditionally on $\filtration_{\iterIdx-1}$, for all $\iterIdx \geq 1$. Then, for any $\delta\in(0,1)$, with probability at least $1-\delta$, we have that:
	\begin{equation}
		\forall\location\in\locDomain,\quad |\classifier(\location) - \hat\classifier_\iterIdx(\location)| \leq \beta_\iterIdx(\delta) \sigma_\iterIdx(\location), \quad \forall\iterIdx\geq 1\,,
	\end{equation}
	where ${\beta_\iterIdx(\delta) := \bound + \sqrt{2\regFactor^{-1}\log(|\eye+\regFactor^{-1}\mat k_\iterIdx|^{1/2}/\delta)}}$, with $|\mat A|$ denoting the determinant of matrix $\mat A$, and $\sigma_\iterIdx^2(\location) := k(\location,\location) - \vec k_\iterIdx(\location)^\transpose (\mat k_\iterIdx + \regFactor\eye)^{-1}\vec k_\iterIdx(\location)\,, \quad \location\in\locDomain, \quad \iterIdx\geq 1.$
\end{theorem}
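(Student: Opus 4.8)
The plan is to lift the estimator into the RKHS feature space and reduce the bound to a self-normalised concentration inequality for a Hilbert-space-valued martingale, in the style of \citet{Abbasi-Yadkori2012, Durand2018}. Writing $\feature(\location) := k(\cdot,\location)$ for the canonical feature map, the reproducing property gives $\classifier(\location) = \inner{\feature(\location),\classifier}_k$, and the kernel estimator in \autoref{eq:pls-estimator} is the evaluation of the primal solution $\hat\classifier_\iterIdx = \mat V_\iterIdx^{-1}\sum_{i=1}^\iterIdx \feature(\location_i)\clabel_i$, where $\mat V_\iterIdx := \regFactor\eye + \sum_{i=1}^\iterIdx \feature(\location_i)\otimes\feature(\location_i)$ is the regularised covariance operator; the equivalence of this primal form with the dual expression in \autoref{eq:pls-estimator} is a standard consequence of the push-through identity. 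Substituting $\clabel_i = \classifier(\location_i) + \gpnoise_i$ and using $\sum_{i=1}^\iterIdx \feature(\location_i)\otimes\feature(\location_i) = \mat V_\iterIdx - \regFactor\eye$ then gives the exact error decomposition
\[
\classifier - \hat\classifier_\iterIdx = \regFactor\mat V_\iterIdx^{-1}\classifier - \mat V_\iterIdx^{-1}\mat S_\iterIdx, \qquad \mat S_\iterIdx := \sum_{i=1}^\iterIdx \feature(\location_i)\gpnoise_i\,.
\]

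Evaluating this identity at an arbitrary $\location$ through the reproducing property and applying Cauchy–Schwarz in the $\mat V_\iterIdx^{-1}$-weighted inner product, I would obtain
\[
|\classifier(\location) - \hat\classifier_\iterIdx(\location)| \leq \norm{\feature(\location)}_{\mat V_\iterIdx^{-1}}\,(\regFactor\norm{\classifier}_{\mat V_\iterIdx^{-1}} + \norm{\mat S_\iterIdx}_{\mat V_\iterIdx^{-1}})\,,
\]
valid for every $\location\in\locDomain$ simultaneously, so that the final quantification over $\locDomain$ comes for free. The leading factor is identified with the posterior standard deviation via the Woodbury/push-through identity, which yields the exact relation $\norm{\feature(\location)}_{\mat V_\iterIdx^{-1}}^2 = \regFactor^{-1}\sigma_\iterIdx^2(\location)$ --- this is precisely where the definition of $\sigma_\iterIdx$ in the statement enters. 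The bias factor is controlled deterministically using $\mat V_\iterIdx^{-1}\preceq\regFactor^{-1}\eye$, giving $\regFactor\norm{\classifier}_{\mat V_\iterIdx^{-1}} \leq \regFactor^{1/2}\norm{\classifier}_k \leq \regFactor^{1/2}\bound$.

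The main obstacle is the stochastic factor $\norm{\mat S_\iterIdx}_{\mat V_\iterIdx^{-1}}$, which must be bounded uniformly over all $\iterIdx\geq 1$. Here I would invoke the self-normalised (method-of-mixtures / pseudo-maximisation) martingale inequality for the $1$-sub-Gaussian noise $\{\gpnoise_\iterIdx\}$ adapted to $\{\filtration_\iterIdx\}$ with predictable queries $\{\location_\iterIdx\}$: with probability at least $1-\delta$, $\norm{\mat S_\iterIdx}_{\mat V_\iterIdx^{-1}}^2 \leq 2\log(|\eye + \regFactor^{-1}\mat k_\iterIdx|^{1/2}/\delta)$ for all $\iterIdx$ at once. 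The delicate aspects are (i) carrying out the mixture argument in a possibly infinite-dimensional Hilbert space, for which I would appeal to the RKHS versions in \citet{Abbasi-Yadkori2012, Durand2018}, and (ii) rewriting the determinant ratio $|\mat V_\iterIdx|/|\regFactor\eye|$ produced by the mixture via Sylvester's identity as the finite Gram determinant $|\eye + \regFactor^{-1}\mat k_\iterIdx|$ that appears in $\beta_\iterIdx(\delta)$. Assembling the three factors, the $\regFactor^{-1/2}$ coming from $\norm{\feature(\location)}_{\mat V_\iterIdx^{-1}}$ rescales the bias to $\bound$ and the logarithmic term to $\sqrt{2\regFactor^{-1}\log(\cdot)}$, reproducing $\beta_\iterIdx(\delta)$ exactly and completing the argument.
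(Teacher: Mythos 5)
Your proposal is correct and follows essentially the same route as the paper: the paper's proof is a one-line appeal to Theorem 1 of Durand et al.\ (2018), and your argument is precisely the standard derivation of that cited result (primal/dual equivalence, Cauchy--Schwarz in the $\mat V_\iterIdx^{-1}$-weighted norm, the self-normalised method-of-mixtures bound, and Sylvester's identity), reproducing $\beta_\iterIdx(\delta)$ with the exact constants. The only detail the paper adds that you omit is the observation that the $1$-sub-Gaussianity assumption is automatically satisfied here because $|\gpnoise_\iterIdx|\leq 1$, but since the theorem statement assumes it directly, this does not affect your proof.
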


\subsection{Regret analysis for BORE}
We now consider BORE with a PLS classifier. For this analysis, we will assume an ideal setting where $\quantile$ is fixed, possibly corresponding to the true $\gamma$th quantile of the observations distribution. However, our results hold for any choice of $\quantile\in\R$ and can therefore be assumed to approximately hold for a varying $\quantile$ which is converging to a fixed value. In this setting, the algorithm's choices are: given by:
\begin{equation}
	\location_\iterIdx \in \argmax_{\location\in\locDomain} \hat\classifier_{\iterIdx-1}(\location)\,,
\end{equation}
where $\hat\classifier_\iterIdx$ is the estimator in \autoref{eq:pls-estimator}. we can then apply \autoref{thr:pls-ucb} to the classifier-based regret in \autoref{eq:optimal-classifier-regret} to obtain a regret bound. For this result, we will also need the following quantity:
\begin{equation}
	\mig_\nObs := \max_{\{\location_i\}_{i=1}^\nObs\subset\locDomain} \frac{1}{2}\log|\eye + \regFactor^{-1}\mat k_\nObs|\,, \quad \nObs\geq 1\,,
	\label{eq:mig}
\end{equation}
where the maximisation is taken over the discrete set of locations $\{\location_i\}_{i=1}^\nObs\subset\locDomain$ and $\mat k_\nObs := [k(\location_i, \location_j)]_{i,j=1}^\nObs$. This quantity denotes the maximum information gain of a Gaussian process model after $\nObs$ observations. We are now ready to state our theoretical result regarding BORE's regret.

\begin{theorem}
	\label{thr:bore-regret}
	Under the conditions in \autoref{thr:pls-ucb}, with probability at least $1-\delta$, $\delta\in(0,1)$, the instant regret of the BORE algorithm with a PLS classifier after $\nIterations\geq 1$ iterations is bounded by:
	\begin{equation}
		\regret_\iterIdx \leq \Lipschitz_\obsNoise \beta_{\iterIdx-1}(\delta)(\sigma_{\iterIdx-1}(\location_\iterIdx) + \sigma_{\iterIdx-1}(\location^*)),
	\end{equation}
	and the cumulative regret by:
	\begin{equation}
		\Regret_\nIterations \leq \Lipschitz_\obsNoise\beta_\nIterations(\delta)\left( \sqrt{4(\nIterations+2)\mig_\nIterations} + \sum_{\iterIdx=1}^\nIterations \sigma_{\iterIdx-1}(\location^*) \right)\,.
	\end{equation}
\end{theorem}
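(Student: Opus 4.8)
The plan is to combine the classifier-based regret bound in \autoref{eq:optimal-classifier-regret} with the confidence interval of \autoref{thr:pls-ucb} and the greedy selection rule $\location_\iterIdx \in \argmax_{\location\in\locDomain}\hat\classifier_{\iterIdx-1}(\location)$. Writing $\classifier(\location) := \cdf_\obsNoise(\quantile - \objective(\location))$ for the optimal classifier, \autoref{eq:optimal-classifier-regret} already gives $\regret_\iterIdx \leq \Lipschitz_\obsNoise(\classifier(\location^*) - \classifier(\location_\iterIdx))$. First I would condition on the event of probability at least $1-\delta$ on which \autoref{thr:pls-ucb} holds simultaneously for all $\iterIdx\geq 1$ and all $\location\in\locDomain$, so that $\classifier(\location^*) \leq \hat\classifier_{\iterIdx-1}(\location^*) + \beta_{\iterIdx-1}(\delta)\sigma_{\iterIdx-1}(\location^*)$ and $\classifier(\location_\iterIdx) \geq \hat\classifier_{\iterIdx-1}(\location_\iterIdx) - \beta_{\iterIdx-1}(\delta)\sigma_{\iterIdx-1}(\location_\iterIdx)$. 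Subtracting these and using that $\location_\iterIdx$ maximises $\hat\classifier_{\iterIdx-1}$, whence $\hat\classifier_{\iterIdx-1}(\location^*) - \hat\classifier_{\iterIdx-1}(\location_\iterIdx) \leq 0$, the mean terms cancel and I recover the instant-regret bound $\regret_\iterIdx \leq \Lipschitz_\obsNoise\beta_{\iterIdx-1}(\delta)(\sigma_{\iterIdx-1}(\location_\iterIdx) + \sigma_{\iterIdx-1}(\location^*))$.

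For the cumulative bound I would sum the instant regrets over $\iterIdx = 1,\dots,\nIterations$. Since the determinant $|\eye + \regFactor^{-1}\mat k_\iterIdx|$ is non-decreasing as observations accumulate, $\beta_\iterIdx(\delta)$ is non-decreasing in $\iterIdx$, so I can upper bound every $\beta_{\iterIdx-1}(\delta)$ by $\beta_\nIterations(\delta)$ and factor it out, splitting the remainder into $\sum_\iterIdx\sigma_{\iterIdx-1}(\location_\iterIdx)$ and $\sum_\iterIdx\sigma_{\iterIdx-1}(\location^*)$. The latter appears verbatim in the statement, so it is carried through untouched.

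It then remains to bound $\sum_{\iterIdx=1}^\nIterations\sigma_{\iterIdx-1}(\location_\iterIdx)$ by the information gain. Here I would use the standard elliptical-potential argument: by Cauchy--Schwarz, $\left(\sum_{\iterIdx=1}^\nIterations\sigma_{\iterIdx-1}(\location_\iterIdx)\right)^2 \leq \nIterations\sum_{\iterIdx=1}^\nIterations\sigma_{\iterIdx-1}^2(\location_\iterIdx)$, and since $k(\location,\location)\leq 1$ forces $\sigma_{\iterIdx-1}^2(\location_\iterIdx)\leq 1$, the elementary inequality $s \leq \frac{1}{\log(1+\regFactor^{-1})}\log(1+\regFactor^{-1}s)$ valid on $[0,1]$, combined with the telescoping identity $\sum_\iterIdx\log(1+\regFactor^{-1}\sigma_{\iterIdx-1}^2(\location_\iterIdx)) = \log|\eye+\regFactor^{-1}\mat k_\nIterations| \leq 2\mig_\nIterations$ (with $\mig_\nIterations$ as in \autoref{eq:mig}), bounds $\sum_\iterIdx\sigma_{\iterIdx-1}^2(\location_\iterIdx)$ by a constant multiple of $\mig_\nIterations$; tracking the regularisation constant (order-one $\regFactor$) then yields $\sum_{\iterIdx=1}^\nIterations\sigma_{\iterIdx-1}(\location_\iterIdx) \leq \sqrt{4(\nIterations+2)\mig_\nIterations}$, and substituting back gives the claimed cumulative bound.

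The hard part will not be any single calculation but the fact that BORE selects $\location_\iterIdx$ by maximising the bare classifier mean $\hat\classifier_{\iterIdx-1}$ rather than an upper confidence bound, so the usual optimism step cannot be invoked at the reference point $\location^*$: the uncertainty $\sigma_{\iterIdx-1}(\location^*)$ there does not cancel, and we are forced to retain the term $\sum_\iterIdx\sigma_{\iterIdx-1}(\location^*)$. Unlike $\sum_\iterIdx\sigma_{\iterIdx-1}(\location_\iterIdx)$, this sum is not controlled by the information gain and need not be sublinear, since the algorithm never guarantees sampling near $\location^*$. Isolating this term honestly, and recognising that it is exactly what an added exploration bonus would eliminate, is the crux of the analysis and the gap that motivates BORE++.
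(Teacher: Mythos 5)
Your proposal is correct and follows essentially the same route as the paper's proof: the same classifier-based regret decomposition from \autoref{eq:optimal-classifier-regret}, a two-sided application of \autoref{thr:pls-ucb} at $\location^*$ and $\location_\iterIdx$, cancellation of the mean terms via the greedy rule $\hat\classifier_{\iterIdx-1}(\location^*) \leq \hat\classifier_{\iterIdx-1}(\location_\iterIdx)$, and then summing with the monotone $\beta_\nIterations(\delta)$ factored out while the $\sum_{\iterIdx=1}^\nIterations \sigma_{\iterIdx-1}(\location^*)$ term is carried through untouched. The only difference is cosmetic: you re-derive the bound $\sum_{\iterIdx=1}^\nIterations \sigma_{\iterIdx-1}(\location_\iterIdx) \leq \sqrt{4(\nIterations+2)\mig_\nIterations}$ via the standard elliptical-potential argument (with the final constant-tracking left slightly informal), whereas the paper simply cites it as a known lemma (\autoref{thr:variance-sum-bound}), and your closing observation that the uncontrolled sum at $\location^*$ is precisely what motivates BORE++ matches the paper's own discussion.
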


As \autoref{thr:bore-regret} shows, the regret of the BORE algorithm in the PLS setting is comprised of two components. The first term is related to the regret of a GP-UCB algorithm \citep[see][Thr. 3]{Chowdhury2017} and its known to grow sub-linearly for a few popular kernels, such as the squared exponential and the Mat\'ern class \citep{Srinivas2010, Vakili2021}. The second term, however, reflects the uncertainty of the algorithm around the optimum location $\location^*$. If the algorithm never samples at that location, this second summation might have a mostly linear growth, which will not lead to a vanishing regret. In fact, if we consider \autoref{eq:pls-estimator} and a RKHS with a translation-invariant kernel, we see that, as soon as an observation $\clabel_\iterIdx=1$ is collected at a location $\location_\iterIdx \neq \location^*$, that location will constitute the maximum of the classifier output. Then the algorithm would keep returning to that same location, missing opportunities to sample at $\location^*$.

It is worth noting that \autoref{thr:bore-regret} reflects the regret of BORE in an idealistic setting where the algorithm uses the optimal PLS estimator in the function space $\Hspace$. However, if we train a parametric classifier, such as a neural network, via gradient descent, the behaviour will not necessarily be the same, and the algorithm might still achieve a good performance. In the original BORE paper, for instance, a parametric classifier is trained by minimising the binary cross-entropy loss \citep{tiao2021bore} and leads to a successful performance in experiments. Neural network models trained via stochastic gradient descent are known to provide approximate samples of a posterior distribution \citep{Bardsley2014, Mandt2017}, instead of an optimal best-fit predictor, which might make BORE behave like Thompson sampling \citep{Russo2016} (see discussion in the appendix).
Nevertheless, \autoref{thr:bore-regret} still shows us that BORE may get stuck into local optima, which is not ideal for BO methods. In the next section, we present an extension of the BORE framework which addresses this shortcoming.

\section{BORE++: improved uncertainty estimates}



As discussed in the previous section, the lack of uncertainty quantification in the estimation of the classifier for the original BORE might lead to sub-optimal performance. To address this shortcoming, we present an approach for uncertainty quantification in the BORE framework which leads to improvements in performance and theoretical optimality guarantees. Our approach is based on using an upper confidence bound (UCB) on the predicted class probabilities as the acquisition function for BORE. Due to its improved uncertainty estimates, we call this approach BORE++.

\subsection{Class-probability upper confidence bounds}
We propose replacing $\hat\classifier_\iterIdx$ in \autoref{alg:bore} by an upper confidence bound which is such that:
\begin{equation}
	\forall\iterIdx\geq 1,\quad \classifier^*(\location) \leq \classifier_{\iterIdx,\delta}(\location), \quad \forall \location\in\locDomain
	\label{eq:classifier-ucb}
\end{equation}
which with probability greater than $1-\delta$, given $\delta\in (0,1)$. Therefore, $\classifier_{\iterIdx,\delta}(\location)$ represents an upper quantile over the optimal class probability $\classifier^*(\location)$. BORE++ selects $\location_\iterIdx \in \argmax_{\location\in\locDomain} \classifier_{\iterIdx-1, \delta}(\location)$.

To derive an upper confidence bound on a classifier's predictions $\classifier(\location)$, we can take a few different approaches. For a parametric model $\classifier_\parameters$, a Bayesian model updating the posterior $p(\parameters|\dataset_\iterIdx)$ leads to a corresponding predictive distribution over $\classifier_\parameters(\location)$. This is the case of ensemble models \citep{Rokach2010ensemble}, for instance, where we approximate predictions $p(\observation\leq \quantile_\iterIdx|\location, \dataset_\iterIdx) \approx \frac{1}{\nSamples}\sum_{i=1}^\nSamples \classifier_{\parameters^i}(\location)$ with $\parameters^i \sim p(\parameters|\dataset_\iterIdx)$. Instead of using the expected class probability, however, BORE++ uses an (empirical) quantile approximation for $\classifier_{\iterIdx, \delta}$ to ensure \autoref{eq:classifier-ucb} holds. Bayesian neural networks \citep{Penny1999bnn}, random forests \citep{Amit1997forests}, dropout methods, etc. \citep{Polson2017}, also constitute valid approaches for predictive uncertainty estimation. An alternative approach is to place a non-parametric prior over $\classifier^*$, such as a Gaussian process model \citep{Rasmussen2006}, which allows for the modelling of uncertainty directly in the function space where $\classifier^*$ lies. In the next section, we present a concrete derivation of BORE++ for the PLS classifier setting which takes the non-parametric perspective and allows us to derive theoretical performance guarantees.

\subsection{BORE++ with PLS classifiers}
\label{sec:bore-ucb-regret}
{
	In the PLS setting, the result in \autoref{thr:pls-ucb} gives us a closed-form expression for a classifier upper confidence bound satisfying the condition in \autoref{eq:classifier-ucb}. Given $\delta\in(0,1)$, we set:
	\begin{equation}
		\classifier_{\iterIdx, \delta}(\location) :=\min(1, \max(0, \hat\classifier_\iterIdx(\location) + \beta_\iterIdx(\delta)\sigma_\iterIdx(\location)))\in[0,1]\,, \quad \location\in\locDomain\,,
		\label{eq:pls-ucb-classifier}
	\end{equation}
	where $\sigma_\iterIdx$ and $\beta_\iterIdx$ are set according to \autoref{thr:pls-ucb}. We then obtain the following result for BORE++.
}

\begin{theorem}
	\label{thr:pls-regret}	
	{Under the assumptions in \autoref{thr:pls-ucb}}, running the BORE++ algorithm with a PLS classifier $\classifier_{\iterIdx,\delta}$ as defined above yields, with probability at least $1-\delta$, an instant regret bound of:
	\begin{equation}
		\regret_\iterIdx \leq 2\Lipschitz_\obsNoise \beta_\iterIdx(\delta)\sigma_\iterIdx(\location)\,, \quad \forall \iterIdx\geq 1\,,
	\end{equation}
	and a cumulative regret bound after $\nIterations\geq 1$ iterations:
	\begin{equation}
		\begin{split}
			\Regret_\nIterations &\leq 4\Lipschitz_\obsNoise \beta_\nIterations(\delta)\sqrt{(\nIterations+2)\mig_\nIterations} \in \set{O}\left(\sqrt{\nIterations}(\bound\sqrt{\mig_\nIterations} + \mig_\nIterations)\right). 
		\end{split}
	\end{equation}
\end{theorem}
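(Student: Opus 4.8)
The plan is to follow the template of a kernelised UCB regret analysis, using \autoref{thr:pls-ucb} to supply the confidence widths and \autoref{eq:optimal-classifier-regret} to translate class-probability gaps into objective regret. First I would condition on the event of \autoref{thr:pls-ucb}, which holds with probability at least $1-\delta$ and gives $|\classifier^*(\location)-\hat\classifier_\iterIdx(\location)|\leq\beta_\iterIdx(\delta)\sigma_\iterIdx(\location)$ uniformly in $\location$ and $\iterIdx$. The first thing to check is that the clipped acquisition $\classifier_{\iterIdx,\delta}$ in \autoref{eq:pls-ucb-classifier} is a genuine upper bound on $\classifier^*$: since $\classifier^*(\location)=\cdf_\obsNoise(\quantile-\objective(\location))\in[0,1]$ and $\classifier^*(\location)\leq\hat\classifier_\iterIdx(\location)+\beta_\iterIdx(\delta)\sigma_\iterIdx(\location)$ on the good event, clamping to $[0,1]$ preserves the inequality $\classifier^*(\location)\leq\classifier_{\iterIdx,\delta}(\location)$, i.e. \autoref{eq:classifier-ucb} holds.

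For the instant regret I would chain four facts. Starting from \autoref{eq:optimal-classifier-regret}, $\regret_\iterIdx\leq\Lipschitz_\obsNoise(\classifier^*(\location^*)-\classifier^*(\location_\iterIdx))$. The UCB property gives $\classifier^*(\location^*)\leq\classifier_{\iterIdx-1,\delta}(\location^*)$, and since $\location_\iterIdx$ maximises $\classifier_{\iterIdx-1,\delta}$ we get $\classifier_{\iterIdx-1,\delta}(\location^*)\leq\classifier_{\iterIdx-1,\delta}(\location_\iterIdx)$. Finally, dropping the clamp gives $\classifier_{\iterIdx-1,\delta}(\location_\iterIdx)\leq\hat\classifier_{\iterIdx-1}(\location_\iterIdx)+\beta_{\iterIdx-1}(\delta)\sigma_{\iterIdx-1}(\location_\iterIdx)$, and a second application of the confidence bound to $\hat\classifier_{\iterIdx-1}(\location_\iterIdx)-\classifier^*(\location_\iterIdx)$ collapses the gap to $2\beta_{\iterIdx-1}(\delta)\sigma_{\iterIdx-1}(\location_\iterIdx)$, yielding $\regret_\iterIdx\leq 2\Lipschitz_\obsNoise\beta_{\iterIdx-1}(\delta)\sigma_{\iterIdx-1}(\location_\iterIdx)$ (matching the stated bound up to the indexing convention). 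The only subtlety is the degenerate case where the pre-clamp value is negative: then $\classifier_{\iterIdx-1,\delta}(\location_\iterIdx)=0$ forces $\classifier^*(\location^*)=0$, so $\regret_\iterIdx\leq 0$ and the bound holds trivially.

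For the cumulative regret I would sum the instant bound over $\iterIdx=1,\dots,\nIterations$. Since the log-determinant defining $\beta_\iterIdx(\delta)$ is non-decreasing in $\iterIdx$, I can pull out $\beta_\nIterations(\delta)$ as a uniform bound, leaving $\Regret_\nIterations\leq 2\Lipschitz_\obsNoise\beta_\nIterations(\delta)\sum_{\iterIdx=1}^\nIterations\sigma_{\iterIdx-1}(\location_\iterIdx)$. The remaining sum of posterior standard deviations is controlled by the standard information-gain argument (as in \citet{Chowdhury2017}): Cauchy--Schwarz gives $\sum_{\iterIdx=1}^\nIterations\sigma_{\iterIdx-1}(\location_\iterIdx)\leq\sqrt{\nIterations\sum_{\iterIdx}\sigma_{\iterIdx-1}^2(\location_\iterIdx)}$, and the identity $\frac{1}{2}\log|\eye+\regFactor^{-1}\mat k_\nIterations|=\sum_{\iterIdx}\frac{1}{2}\log(1+\regFactor^{-1}\sigma_{\iterIdx-1}^2(\location_\iterIdx))$ together with $k(\location,\location)\leq 1$ and the elementary inequality $s\leq C\log(1+s)$ bounds $\sum_{\iterIdx}\sigma_{\iterIdx-1}^2(\location_\iterIdx)$ by a constant multiple of $\mig_\nIterations$; a suitable handling of the regulariser then gives $\sum_{\iterIdx=1}^\nIterations\sigma_{\iterIdx-1}(\location_\iterIdx)\leq\sqrt{4(\nIterations+2)\mig_\nIterations}$. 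Substituting yields $\Regret_\nIterations\leq 4\Lipschitz_\obsNoise\beta_\nIterations(\delta)\sqrt{(\nIterations+2)\mig_\nIterations}$, and expanding $\beta_\nIterations(\delta)=\bound+\sqrt{2\regFactor^{-1}\log(|\eye+\regFactor^{-1}\mat k_\nIterations|^{1/2}/\delta)}=\set{O}(\bound+\sqrt{\mig_\nIterations})$ recovers the claimed rate $\set{O}(\sqrt{\nIterations}(\bound\sqrt{\mig_\nIterations}+\mig_\nIterations))$.

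The main obstacle I anticipate is the information-gain step: tying the realised posterior variances $\sigma_{\iterIdx-1}^2(\location_\iterIdx)$ to the worst-case log-determinant $\mig_\nIterations$ requires the determinant telescoping identity and the right elementary inequality to reproduce the exact $4(\nIterations+2)$ constant, and this is the one place where the bound on $k(\location,\location)$ and the precise regularisation enter. The rest of the argument is bookkeeping once the UCB validity and the factor-two instant-regret decomposition are in place.
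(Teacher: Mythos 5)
Your proposal is correct and follows essentially the same route as the paper's proof: the standard UCB decomposition of the instant regret into $2\Lipschitz_\obsNoise\beta_{\iterIdx-1}(\delta)\sigma_{\iterIdx-1}(\location_\iterIdx)$ via the confidence bounds of \autoref{thr:pls-ucb} and the maximising property of $\location_\iterIdx$, followed by monotonicity of $\beta_\iterIdx(\delta)$ and the information-gain bound on $\sum_\iterIdx\sigma_{\iterIdx-1}(\location_\iterIdx)$ (the paper's \autoref{thr:variance-sum-bound}). Your explicit treatment of the clipping in \autoref{eq:pls-ucb-classifier} is a small point of additional care that the paper's proof elides, but it does not change the argument.
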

According to \autoref{thr:pls-regret}, the regret of BORE++ vanishes if the maximum information gain $\mig_\nIterations$ grows sub-linearly, since $\lim_{\nIterations\to\infty}\frac{\Regret_\nIterations}{\nIterations} = 0$ and $\min_{\iterIdx \leq \nIterations} \regret_\iterIdx \leq \frac{\Regret_\nIterations}{\nIterations}$. Sub-linear growth is known to be achieved for popular kernels, such as the squared exponential, the Mat\'ern family and linear kernels \citep{Srinivas2010, Vakili2021}. This result also tells us that theoretically BORE++ performs no worse than GP-UCB since they share similar regret bounds \citep{Srinivas2010, Chowdhury2017}. However, in practice, the BORE++ framework offers a series of practical advantages over GP-UCB, such as no need for an explicit surrogate model, and a classifier which does not need to be a GP and can therefore be more flexible and scalable to high-dimensional problems and large amounts of data. The connection with GP-UCB, instead, brings us new insights into how the density-ratio BO algorithm can still share some of the well known guarantees of traditional BO methods.


\section{Batch BORE}
\label{sec:batch-bore}
This section proposes an extension of the BORE framework which allows for multiple queries to the objective function to be performed in parallel. Although many methods for batch BO have been previously proposed in the literature, 
we here focus on approaching batch optimisation as an approximate Bayesian inference problem. 
Instead of having to derive complex heuristics to approximate the utility of a batch of query points, we can view points in a batch as samples from a posterior probability distribution which uses the acquisition function as a likelihood.



\subsection{BORE batches via approximate inference}
Applying an optimisation-as-inference perspective to BORE, we can formulate a batch BO algorithm which does not require an explicit regression model for $\objective$. The classifier $\hat\classifier(\location) \approx p(\observation\leq \quantile|\location)$ naturally turns out as a likelihood function over query locations $\location\in\locDomain$. Since the search space $\locDomain$ is compact, we can assume a uniform prior distribution $p(\location) \propto 1$. Also note that the normalisation constant in this case is simply $\int_\locDomain p(\observation\leq\quantile|\location)p(\location) \diff\location = p(\observation\leq\quantile) = \percentile$. Our posterior distribution then becomes:
\begin{equation}
	\lowerpdf(\location) = p(\location|\observation\leq\quantile) = \frac{p(\observation\leq\quantile|\location)p(\location)}{p(\observation\leq\quantile)}\,.
	\label{eq:bore-posterior}
\end{equation}
Therefore, we formulate a batch version of BORE as an inference problem aiming for:
\begin{equation}
	\variational^* \in \argmin_{\variational \in \Pspace} \kl{\variational}{\lowerpdf},
\end{equation}
where $\kl{\variational}{\lowerpdf}$ denotes the Kullback-Leibler (KL) divergence between $\variational$ and $\lowerpdf$, and $\Pspace$ represents the space of probability distributions over $\locDomain$. Sampling from $\lowerpdf$ would allow us to obtain the points of interest in the search space, including the optimum $\location^*$ and other locations where $\observation\leq\quantile$. However, as the true $p(\observation\leq\quantile|\location)$ is unknown, we 
instead formulate a proxy inference problem with respect to a surrogate target distribution $\hat p_\iterIdx$ based on the classifier model. For BORE, we set $\hat p_\iterIdx(\location) \propto \hat\classifier_{\iterIdx-1}(\location)$, while for BORE++ the setting is $\hat p_\iterIdx(\location) \propto \classifier_{\iterIdx-1,\delta} (\location)$. In contrast to $\lowerpdf(\location) \propto p(\observation\leq\quantile|\location)$, the normalisation constant for the surrogate distributions is unknown, leading us to a proxy problem of minimising $\variational_\iterIdx \in \argmin_{\variational \in \Pspace} \kl{\variational}{\hat p_\iterIdx}$ at each iteration $\iterIdx \geq 1$.
This variational inference problem above can be efficiently solved via Stein variational gradient descent (SVGD) \citep{Liu2016}, described next.

{
	\subsection{Batch sampling via Stein variational gradient descent}
	\label{sec:svgd}
	In our implementation, we apply SVGD to approximately sample a batch $\set{B}_\iterIdx := \{\location_{\iterIdx,i}\}_{i=1}^\nSamples$ of $\nSamples \geq 1$ points from $\hat p_\iterIdx$. Other approximate inference algorithms could also be applied. One of the main advantages of SVGD, however, is that it encourages diversification in the batch, capturing the possible multimodality of $\hat p_\iterIdx$. Given the batch locations, observations can be collected in parallel and then added to the dataset to update the classifier model.
}

{
	SVGD  is an approximate inference algorithm which represents a variational distribution $\variational$ as a set of particles $\{\location^{i}\}_{i=1}^\nSamples$ \citep{Liu2016}. The particles are initialised as \iid{} samples from an arbitrary base distribution and then optimised via a sequence of smooth transformations towards the target distribution, which in our case corresponds to $\hat p_\iterIdx \propto \classifier_{\iterIdx-1, \delta}$. The SVGD steps are given by:
	\begin{align}
		\location^{i}_\iterIdx &\leftarrow \location^{i}_\iterIdx + \rate\sgrad_\iterIdx(\location^{i}_\iterIdx)~, \quad i \in \{1,\dots,\nSamples\},\\
		\sgrad_\iterIdx(\location) &:= \frac{1}{\nSamples} \sum_{j=1}^\nSamples k(\location^{j}_\iterIdx,\location)\nabla_{\location^{j}_\iterIdx}\log \classifier_{\iterIdx-1, \delta}(\location^{j}) + \nabla_{\location^{j}_\iterIdx} k(\location^{j}_\iterIdx, \location),
	\end{align}
	where $k:\locDomain\times\locDomain\to\R$ is a positive-definite kernel, and $\rate > 0$ is a small step size. Intuitively, the first term in the definition of $\vec{\zeta}_\iterIdx$ guides the particles to the modes of $\hat p_\iterIdx$, while the second term encourages diversification by repelling nearby particles. Theoretical convergence guarantees \citep{Liu2017, Korba2020} and practical extensions, such as second-order methods \citep{Detommaso2018, Liu2019} and derivative-free approaches \citep{Han2018}, have been proposed in the literature. Further details on SVGD can be found in \citet{Liu2016}.
}



\subsection{Regret bound for Batch BORE++ {with PLS classifiers}}
We follow the derivations in \citet{Oliveira2021} to derive a distributional regret bound for batch BORE++ with respect to its target sampling distribution $\lowerpdf$, which is presented in the following result.

\begin{theorem}
	\label{thr:batch-pls-regret}
	Under the same assumptions in \autoref{thr:pls-ucb}, running batch BORE++ with $\classifier_{\iterIdx,\delta}$ set as in \autoref{eq:pls-ucb-classifier}, we obtain a bound on the instantaneous distributional regret:
	\begin{equation}
		\eregret_\iterIdx := \expectation_{\location\sim \hat p_\iterIdx}[\objective(\location)] - \expectation_{\location\sim \lowerpdf}[\objective(\location)] \leq 2\Lipschitz_\obsNoise\Lipschitz_\classifier\beta_{\iterIdx-1}(\delta)\expectation_{q_\iterIdx}[\sigma_{\iterIdx-1}]\,, \quad \iterIdx \geq 1\,,
	\end{equation}
	where $\Lipschitz_\classifier := \max_{\location\in\locDomain} \frac{1}{\classifier(\location)}$, and on the cumulative distributional regret at $\nIterations\geq 1$:
	\begin{equation}
		\eRegret_\nIterations := \sum_{\iterIdx=1}^\nIterations \eregret_\iterIdx \leq 4\Lipschitz_\obsNoise\Lipschitz_\classifier\beta_\nIterations(\delta) \sqrt{(\nIterations+2)\mig_\nIterations} \in \set{O}(\sqrt{\nIterations}(\bound\sqrt{\mig_\nIterations}+\sqrt{\mig_\nIterations\mig_{\nSamples\nIterations}}))
	\end{equation}
	both of which hold with probability at least $1-\delta$.
\end{theorem}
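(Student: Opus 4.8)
The plan is to bound the instantaneous distributional regret by a pointwise-in-expectation argument that exploits the fact that both sampling distributions are proportional to classifiers which \autoref{thr:pls-ucb} keeps uniformly close, and then to pass to the cumulative bound by the usual Cauchy--Schwarz-plus-information-gain route, with the batch subtlety handled as in \citet{Oliveira2021}.

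First I would reduce the regret to an integral against the confidence width. Writing $\bar{\objective}_{*} := \expectation_{\lowerpdf}[\objective]$ and $Z_\iterIdx := \int_\locDomain \classifier_{\iterIdx-1,\delta}\diff\location$, the definitions $\lowerpdf \propto \classifier^*$ (normaliser $\percentile$) and $\hat p_\iterIdx \propto \classifier_{\iterIdx-1,\delta}$ give the trivial identity $\eregret_\iterIdx = \expectation_{\hat p_\iterIdx}[\objective] - \bar{\objective}_{*} = \expectation_{\hat p_\iterIdx}[\objective - \bar{\objective}_{*}]$. The key observation is that $\int_\locDomain(\objective - \bar{\objective}_{*})\classifier^*\diff\location = 0$ by the very definition of $\bar{\objective}_{*}$, so subtracting this zero lets me replace $\classifier_{\iterIdx-1,\delta}$ by the width $w_\iterIdx := \classifier_{\iterIdx-1,\delta} - \classifier^* \ge 0$:
\[
	\eregret_\iterIdx = \frac{1}{Z_\iterIdx}\int_\locDomain (\objective - \bar{\objective}_{*})\, w_\iterIdx \,\diff\location\,.
\]

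Second I would bound the two factors of the integrand. For the objective deviation, the relation $\objective = \quantile - \cdf_\obsNoise^{-1}(\classifier^*)$ from \autoref{eq:optimal-classifier-regret}, the $\Lipschitz_\obsNoise$-Lipschitzness of $\cdf_\obsNoise^{-1}$, and $\classifier^*\in[0,1]$ give $|\objective(\location) - \bar{\objective}_{*}| \le \max\objective - \min\objective \le \Lipschitz_\obsNoise$, since $\bar{\objective}_{*}$ lies in the range of $\objective$. For the width, \autoref{thr:pls-ucb} holds uniformly in $\location$ and $\iterIdx$ on a single event of probability at least $1-\delta$, and there yields both $\classifier_{\iterIdx-1,\delta} \ge \classifier^* > 0$ and $0 \le w_\iterIdx \le 2\beta_{\iterIdx-1}(\delta)\sigma_{\iterIdx-1}$ (the confidence-interval-width estimate already used for \autoref{thr:pls-regret}). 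Rewriting $\frac{1}{Z_\iterIdx}\int(\cdot)\diff\location = \expectation_{\hat p_\iterIdx}[(\cdot)/\classifier_{\iterIdx-1,\delta}]$ and using $1/\classifier_{\iterIdx-1,\delta} \le 1/\classifier^* \le \Lipschitz_\classifier$ on that event then delivers the instant bound $\eregret_\iterIdx \le 2\Lipschitz_\obsNoise\Lipschitz_\classifier\beta_{\iterIdx-1}(\delta)\expectation_{\hat p_\iterIdx}[\sigma_{\iterIdx-1}]$, with $q_\iterIdx = \hat p_\iterIdx$ under the idealised-inference assumption.

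Finally, for the cumulative regret I would sum over $\iterIdx$, pull out $\beta_\nIterations(\delta)$ using that $\beta_\iterIdx(\delta)$ is non-decreasing, and apply Cauchy--Schwarz followed by Jensen to get $\sum_\iterIdx \expectation_{\hat p_\iterIdx}[\sigma_{\iterIdx-1}] \le \sqrt{\nIterations \sum_\iterIdx \expectation_{\hat p_\iterIdx}[\sigma_{\iterIdx-1}^2]}$. The remaining and main obstacle is to bound $\sum_\iterIdx \expectation_{\hat p_\iterIdx}[\sigma_{\iterIdx-1}^2]$ by the maximum information gain: unlike the sequential case, the $\nSamples$ points of a batch are drawn from the same $\hat p_\iterIdx$ and share the posterior variance $\sigma_{\iterIdx-1}$, so the elliptical-potential/telescoping argument must be carried out in expectation over both the sampling distribution and the batch observations, which is exactly the device of \citet{Oliveira2021}; this yields $\sum_\iterIdx \expectation_{\hat p_\iterIdx}[\sigma_{\iterIdx-1}^2]\in\set{O}(\mig_\nIterations)$ and hence the stated $4\Lipschitz_\obsNoise\Lipschitz_\classifier\beta_\nIterations(\delta)\sqrt{(\nIterations+2)\mig_\nIterations}$. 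The $\set{O}$-expression then follows by noting that, after $\nIterations$ batches of size $\nSamples$, the confidence coefficient scales as $\beta_\nIterations(\delta)\in\set{O}(\bound+\sqrt{\mig_{\nSamples\nIterations}})$, so the information gain of the full $\nSamples\nIterations$ observations enters only through $\beta$ while the variance sum contributes $\mig_\nIterations$.
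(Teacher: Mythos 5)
Your proposal is correct, and on the instantaneous bound it takes a genuinely different --- and in fact tighter --- route than the paper. The paper bounds $\eregret_\iterIdx$ by $\Lipschitz_\obsNoise\norm{\classifier}_\infty$ times the total variation between $\hat p_\iterIdx$ and $\lowerpdf$, invokes Pinsker's inequality to pass to $\sqrt{\tfrac{1}{2}\kl{\hat p_\iterIdx}{\lowerpdf}}$, and then bounds the KL divergence by $2\Lipschitz_\classifier\beta_{\iterIdx-1}(\delta)\expectation_{\hat p_\iterIdx}[\sigma_{\iterIdx-1}]$ via the mean value theorem applied to $\log$ (this is where $\Lipschitz_\classifier$ enters, as a bound on the derivative of $\log$ above $\min_{\location\in\locDomain}\classifier(\location)>0$). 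Taken literally, that chain yields a bound of order $\sqrt{\beta_{\iterIdx-1}(\delta)\expectation[\sigma_{\iterIdx-1}]}$, and the passage to the stated linear-in-$\expectation[\sigma_{\iterIdx-1}]$ form is left implicit. Your decomposition --- subtracting the identity $\int_\locDomain(\objective-\bar{\objective}_*)\classifier^*\diff\location=0$, bounding $|\objective-\bar{\objective}_*|\le\Lipschitz_\obsNoise$ and $0\le w_\iterIdx\le 2\beta_{\iterIdx-1}(\delta)\sigma_{\iterIdx-1}$ on the uniform confidence event of \autoref{thr:pls-ucb}, and reweighting by $1/\classifier_{\iterIdx-1,\delta}\le 1/\classifier^*\le\Lipschitz_\classifier$ --- avoids Pinsker and the KL entirely and lands exactly on the stated bound, with $\Lipschitz_\classifier$ playing the same role; what you lose relative to the paper is only the intermediate control of $\kl{\hat p_\iterIdx}{\lowerpdf}$ itself, which the paper's route provides as a by-product (a statement about the quality of the variational target, not just the regret). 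For the cumulative bound the two arguments coincide in substance: the batch subtlety must be resolved by reducing batch-conditioned variances to sequential ones, which the paper does with \autoref{thr:batch-variance} (posterior variance shrinks under additional observations, so keep only the first point of each batch) together with the fact that within-batch points are \iid{} draws from $q_\iterIdx$, before applying \autoref{thr:variance-sum-bound}; your deferral to \citet{Oliveira2021} is exactly this device. Two minor remarks: applying \autoref{thr:variance-sum-bound} directly to $\sum_\iterIdx\expectation[\sigma_{\iterIdx-1}]$ after the \iid{} replacement gives precisely the constant $4\Lipschitz_\obsNoise\Lipschitz_\classifier\beta_\nIterations(\delta)\sqrt{(\nIterations+2)\mig_\nIterations}$, whereas your Cauchy--Schwarz-on-$\sigma^2$ detour gives the same order with a slightly different constant; and both proofs need $\min_{\location\in\locDomain}\classifier^*(\location)>0$ (full-support noise CDF) for $\Lipschitz_\classifier$ to be finite, an assumption you use implicitly just as the paper does.
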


As in the case of non-batch BORE++, the distributional regret bounds for the batch algorithm also grow sub-linearly for most popular kernels, leading to an asymptotically vanishing simple regret. Although different, to compare the distributional regret of batch BORE++ with the non-distributional regret bounds for BORE++, we may consider a case where $\quantile$ is set to the function minimum $\quantile := \objective(\location^*) = \min_{\location\in\locDomain} \objective(\location)$ and the observation noise is small. In this case, the batch sampling distribution would converge to a Dirac at the optimum, so that $\expectation_{\location\sim\lowerpdf}[\objective(\location)] \approx \objective(\location^*)$. Compared to the regret of non-batch BORE++ (\autoref{thr:pls-regret}) after collecting an equivalent number of observations $\nIterations' := \nSamples\nIterations$, the expected regret of the batch version of BORE++ {after $\nIterations$ iterations} is then lower by a factor of $\mig_{\nIterations}/\mig_{\nSamples\nIterations}$, noting that $\mig_\nIterations\leq \mig_{\nIterations'} = \mig_{\nSamples\nIterations}$. {Therefore, batch BORE++ should be able to achieve lower regret per runtime than sequential BORE++ with an equivalent number of observations.}



\section{Related work}
Since their proposal by \citet{Schonlau1998}, batch Bayesian optimisation methods have appeared in various forms in the literature. Many methods are based on heuristics derived from estimates given by a Gaussian process regression model \citep{Gonzalez2016batch, Wang2017batch, Azimi2010}. Others are based on Monte Carlo estimates of multi-query acquisition functions \citep{Snoek2012, Wilson2018}, optimising points over GP posterior samples \citep{Kandasamy2018}, solving local optimisation problems \citep{Eriksson2019}, or optimising over ensembles of acquisition functions \citep{Zhang2022}. Despite that, the prevalent approaches to batch BO are still based on a GP regression model, which require prior knowledge about the objective function and do not scale to high-dimensional problems. We instead take a different approach by viewing BO as a density-ratio estimation problem following the BORE framework by \citet{tiao2021bore}. For batch design, we take an optimisation-as-inference approach \citep{Todorov2008, Fellows2019} by applying Stein variational gradient descent, a non-parametric approximate inference method \citep{Liu2016}, which has been recently combined with GP-based BO \citep{Gong2019, Oliveira2019aabi}. Our theoretical results, however, are agnostic to the choice of inference algorithm. In contrast to traditional batch BO methods, the inference approach does not require solving inter-dependent optimisation problems for each batch point, as in heuristic-based approaches \citep{Azimi2010, Gonzalez2016batch, Wang2017batch}, Monte Carlo integration over the GP posterior \citep{Snoek2012, Wilson2018}, nor sampling from it \citep{Kandasamy2018}. SVGD allows batch selection to be solved in a vectorised way, which can take advantage of hardware accelerators, such as GPUs.

\section{Experiments}

This section presents experiments assessing the theoretical results and demonstrating the practical performance of batch BORE on a series of global optimisation benchmarks. We compared our methods against GP-based BO baselines in both experiments sets. Additional experimental results, including the sequential setting (Appendix E), a description of the experiments setup (Appendix E), and further discussions on theoretical aspects can be found in the supplementary material.\footnote{Code will be made available at \url{https://github.com/rafaol/batch-bore-with-guarantees}}

%

\paragraph{Theory assessment.} We first present simulated experiments assessing the theoretical results in practice, testing BORE and BORE++ in the PLS setting. As a baseline, we compare both methods against GP-UCB. This experiment was run by generating a random base classifier in the RKHS $\Hspace_k$ and then a corresponding objective function via the inverse noise CDF $\cdf_\obsNoise^{-1}$. The search space was set as a uniformly-sampled finite subset of the unit interval $\locDomain := [0,1] \subset\R$. We applied the theory-backed settings for BORE++ (\autoref{sec:bore}) and GP-UCB \citep{Durand2018}, while BORE employed the optimal PLS classifier (\autoref{eq:pls-estimator}).

\begin{wrapfigure}{r}{0.3\textwidth}
	\centering
	\includegraphics[width=0.3\textwidth]{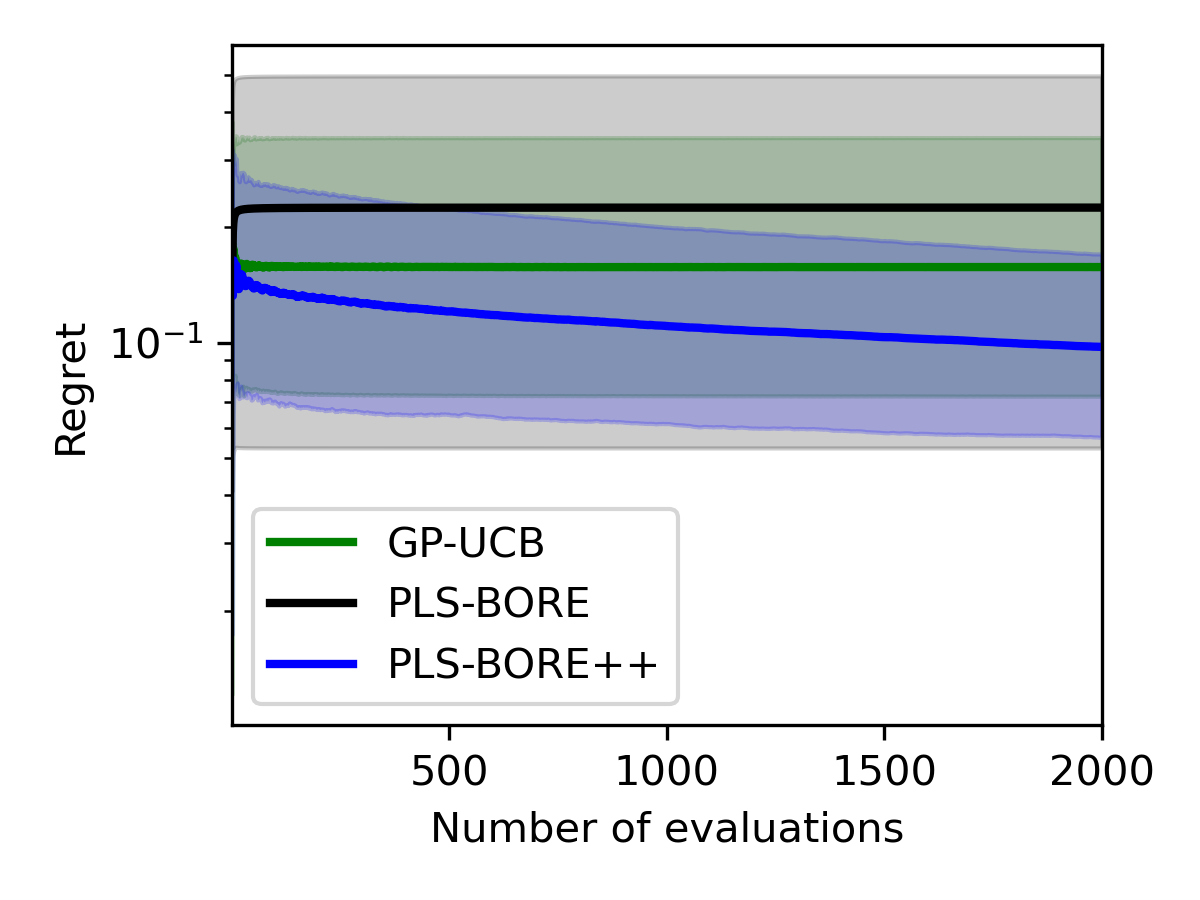}
	\captionof{figure}[Theory assessment]{Regret in theory assessment experiment. Results were averaged over 10 trials, and the shaded area indicates the 95\% confidence interval.\footnotemark}
	\label{fig:toy}
\end{wrapfigure}

As the results in \autoref{fig:toy} show, BORE using an optimal PLS classifier simply gets stuck at a its initial point, resulting in constant regret. BORE++, however, is able to progress in the optimisation problem towards the global optimum, outperforming the GP-UCB baseline.

\begin{figure}
	\centering
	\subfloat{\includegraphics[width=0.31\textwidth]{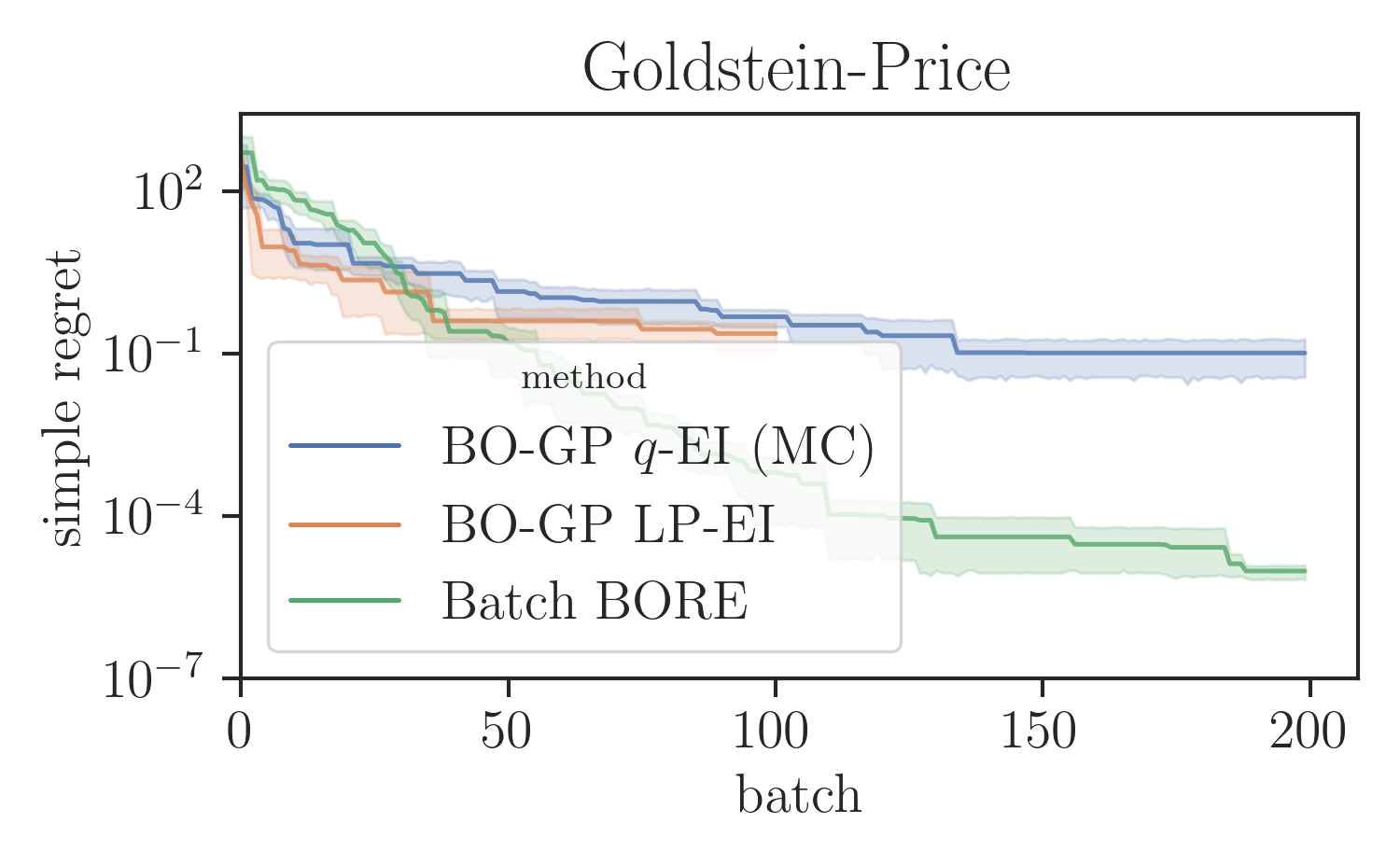}} 
	\subfloat{\includegraphics[width=0.31\textwidth]{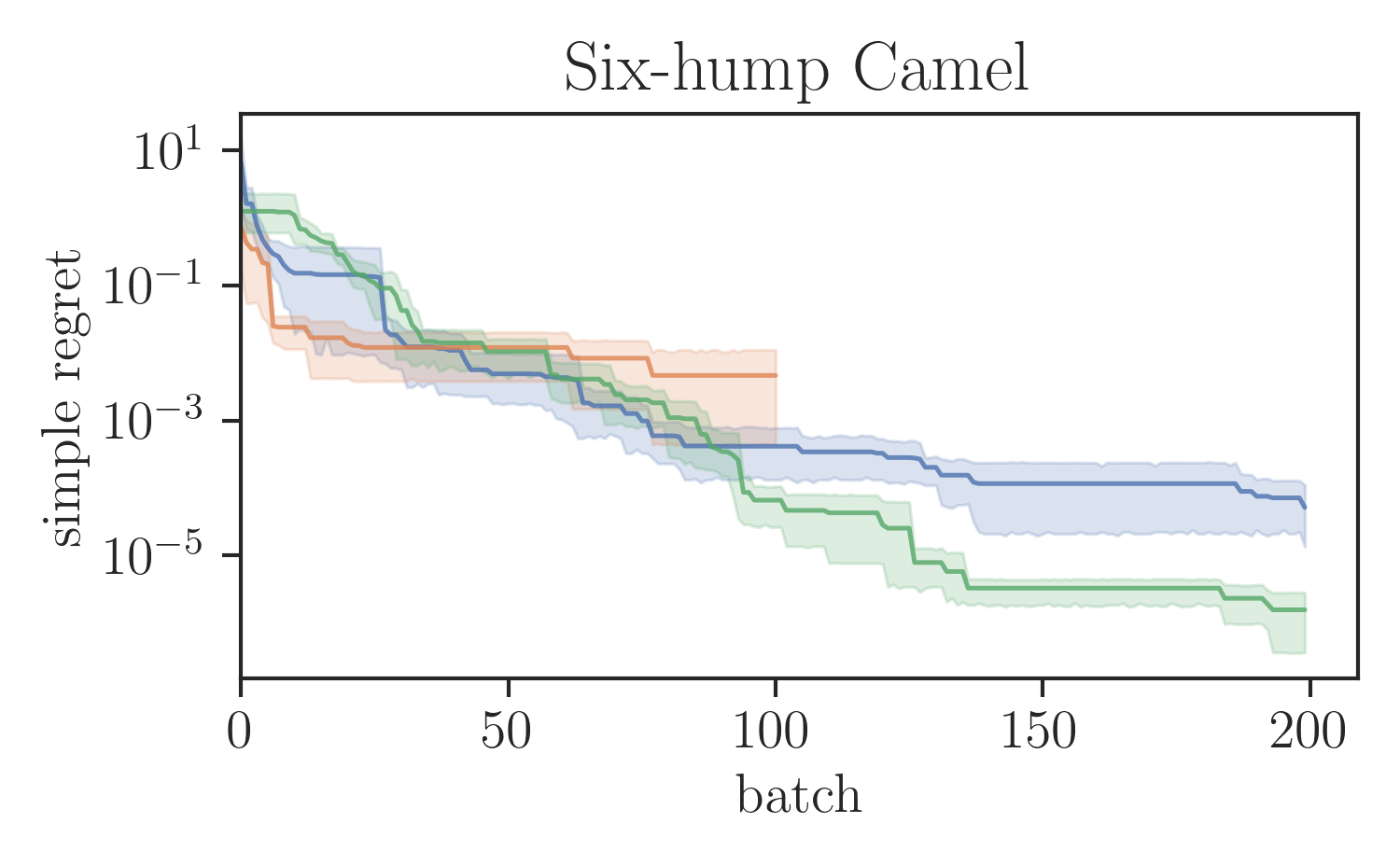}} 
	\subfloat{\includegraphics[width=0.31\textwidth]{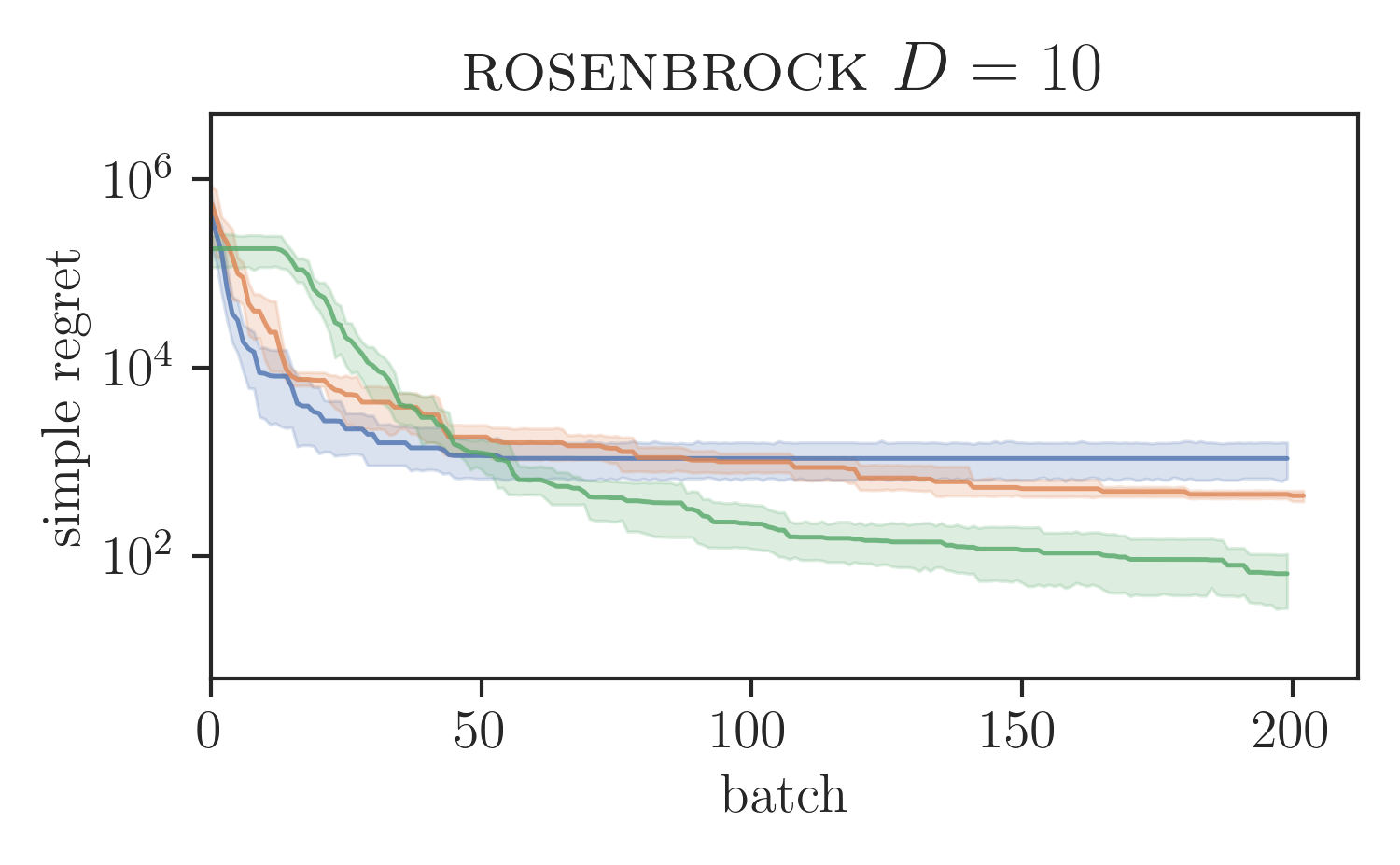}} 
	
	\subfloat{\includegraphics[width=0.3\textwidth]{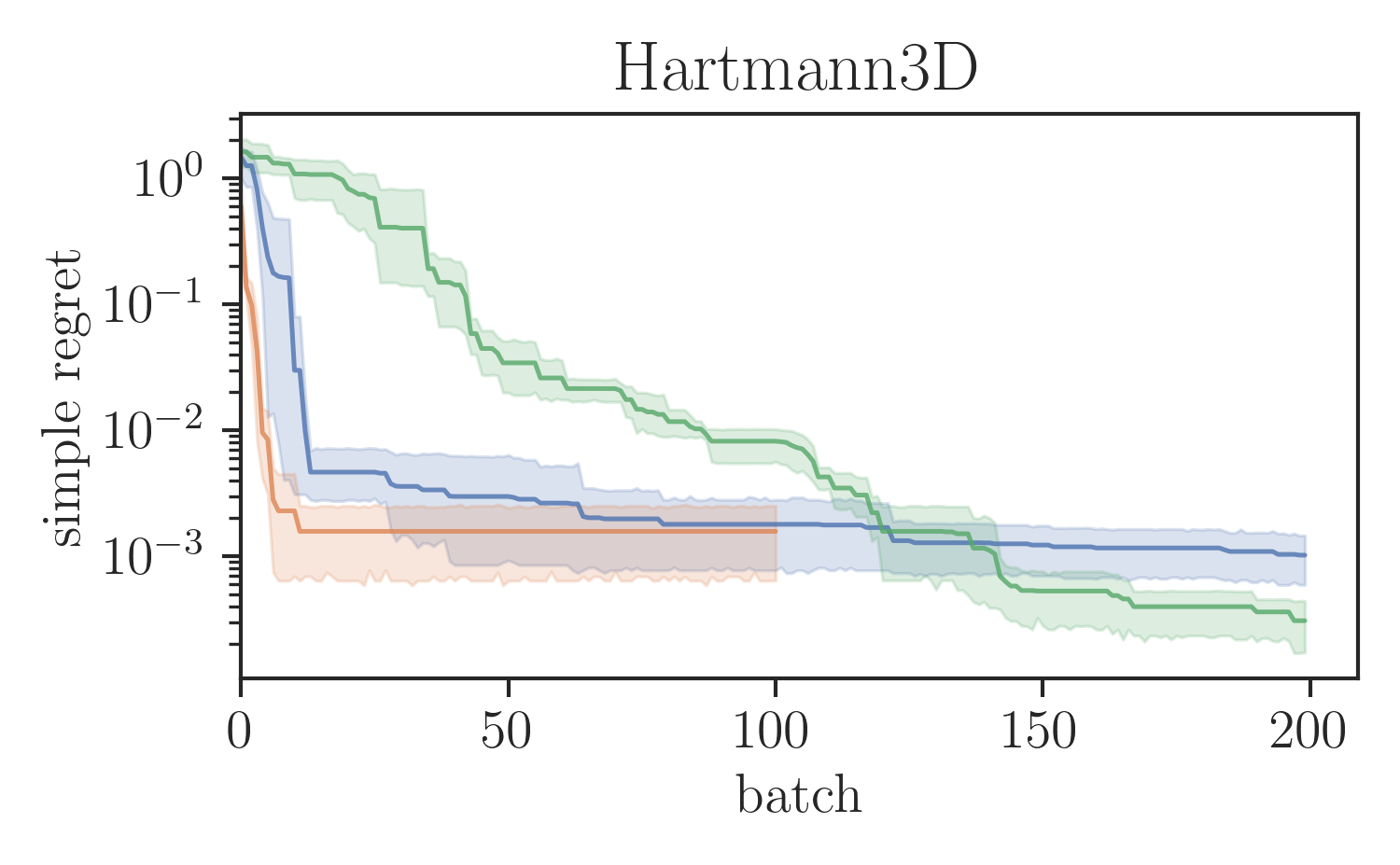}} 
	\subfloat{\includegraphics[width=0.3\textwidth]{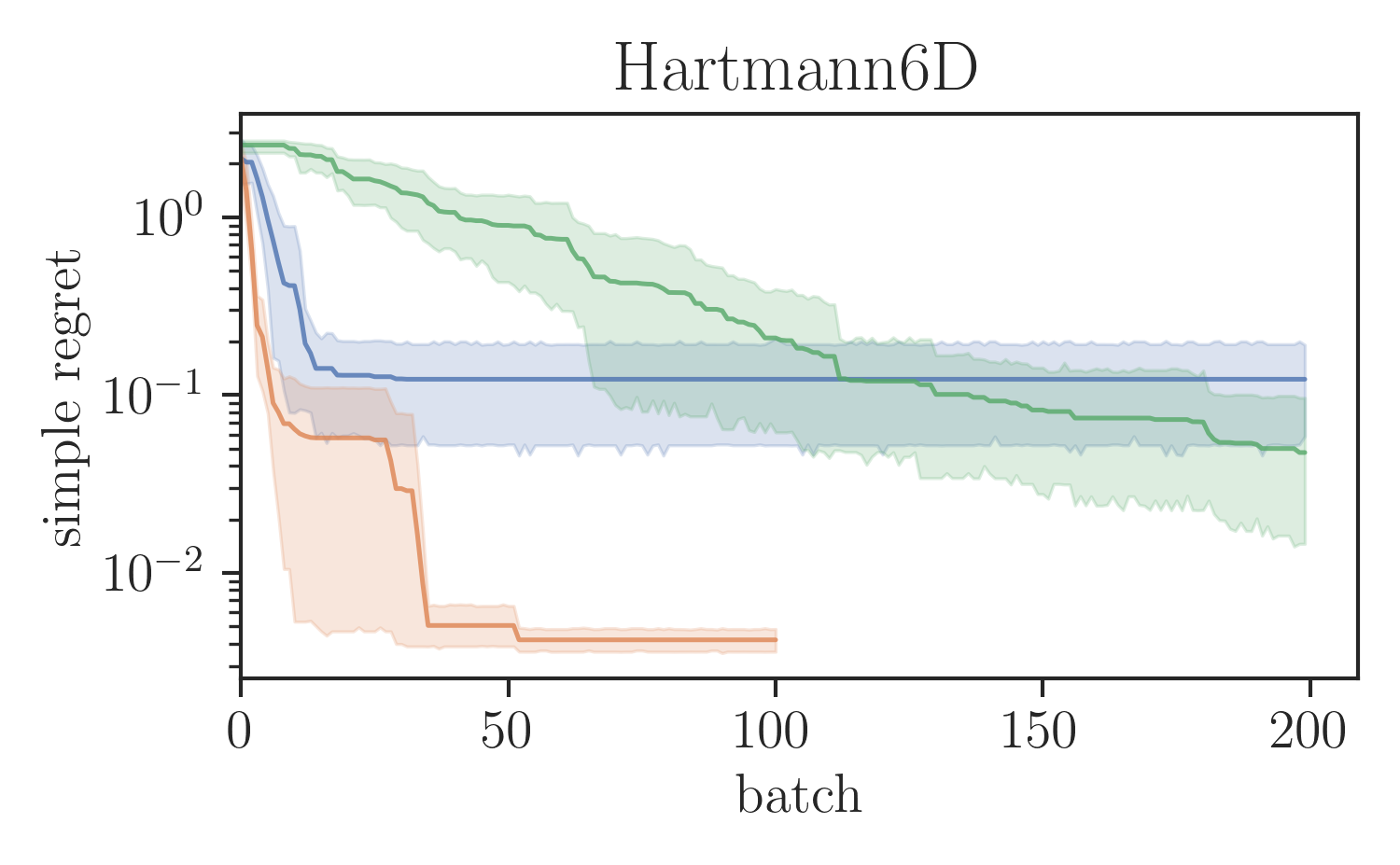}} 
	\subfloat{\includegraphics[width=0.31\textwidth]{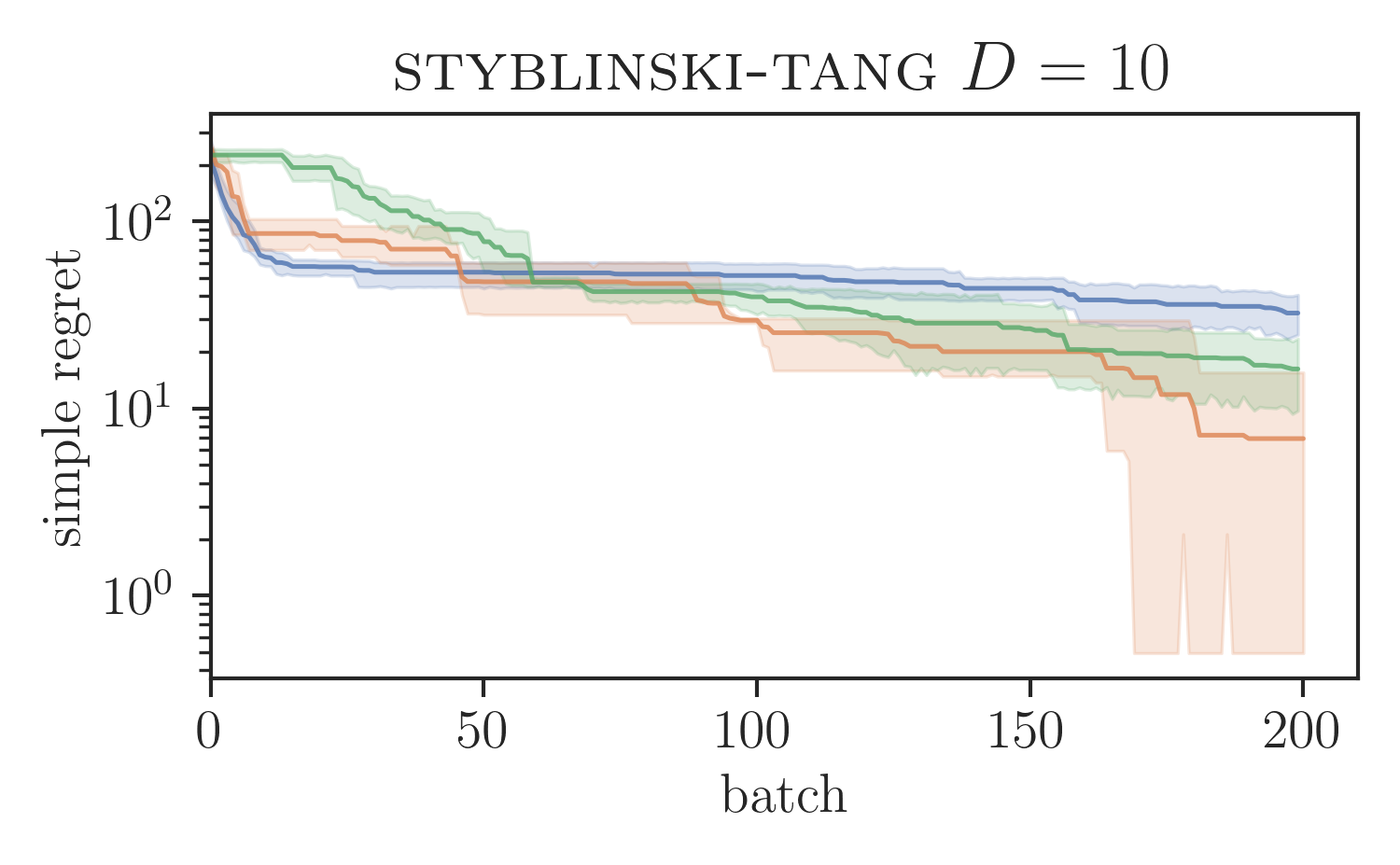}} 
	\caption{Performance on synthetic benchmarks. Plots show the simple regret, i.e., $\min_{\iterIdx\leq\nIterations}\regret_\iterIdx$, per iteration. Results were averaged over 5 trials, and shaded areas indicate the 95\% confidence interval.}
	\label{fig:qs-bore-basic}
\end{figure}

\paragraph{Global optimisation benchmarks.} 
We evaluated the proposed SVGD-based batch BORE method in a series of test functions for global optimisation comparing it against other BO baselines. In particular, for our comparisons, we ran the locally penalised EI (LP-EI) method \citep{Gonzalez2016batch} and the Monte Carlo based $q$-EI method \citep{Snoek2012}, which are both based on the EI algorithm, like BORE. Results are presented in Figure \ref{fig:qs-bore-basic}. All methods ran for $\nIterations := 200$ iterations and used of batch size of 10 evaluations per iteration. Additional experimental details are deferred to the supplementary material.\footnotetext{Linear interpolation is applied to obtain the plotted confidence intervals when the number of trials is small.}

As \autoref{fig:qs-bore-basic} shows, batch BORE is able to outperform its baselines on most of the global optimisation benchmarks. We also note that, in some case, due to its complexity the LP-EI method becomes computationally infeasible after 100 iterations, having to be aborted halfway through the optimisation. Batch BORE, however, is able to maintain steady performance throughout its runs.

\paragraph{Real-data benchmarks.}
Lastly, we compared the sequential version of BORE++ against BORE and other baselines, including traditional BO methods, such as GP-UCB and GP-EI \citep{Shahriari2016}, the Tree-structured Parzen Estimator (TPE) \citep{bergstra2011algorithms}, and random search, on real-data benchmarks. In particular, we assessed the algorithms on some of the same benchmarks present in the original BORE paper \citep{tiao2021bore}. 

\begin{figure}
	\centering
	\subfloat[Hyper-parameter tuning on Parkinson's dataset]{\includegraphics[width=0.5\textwidth]{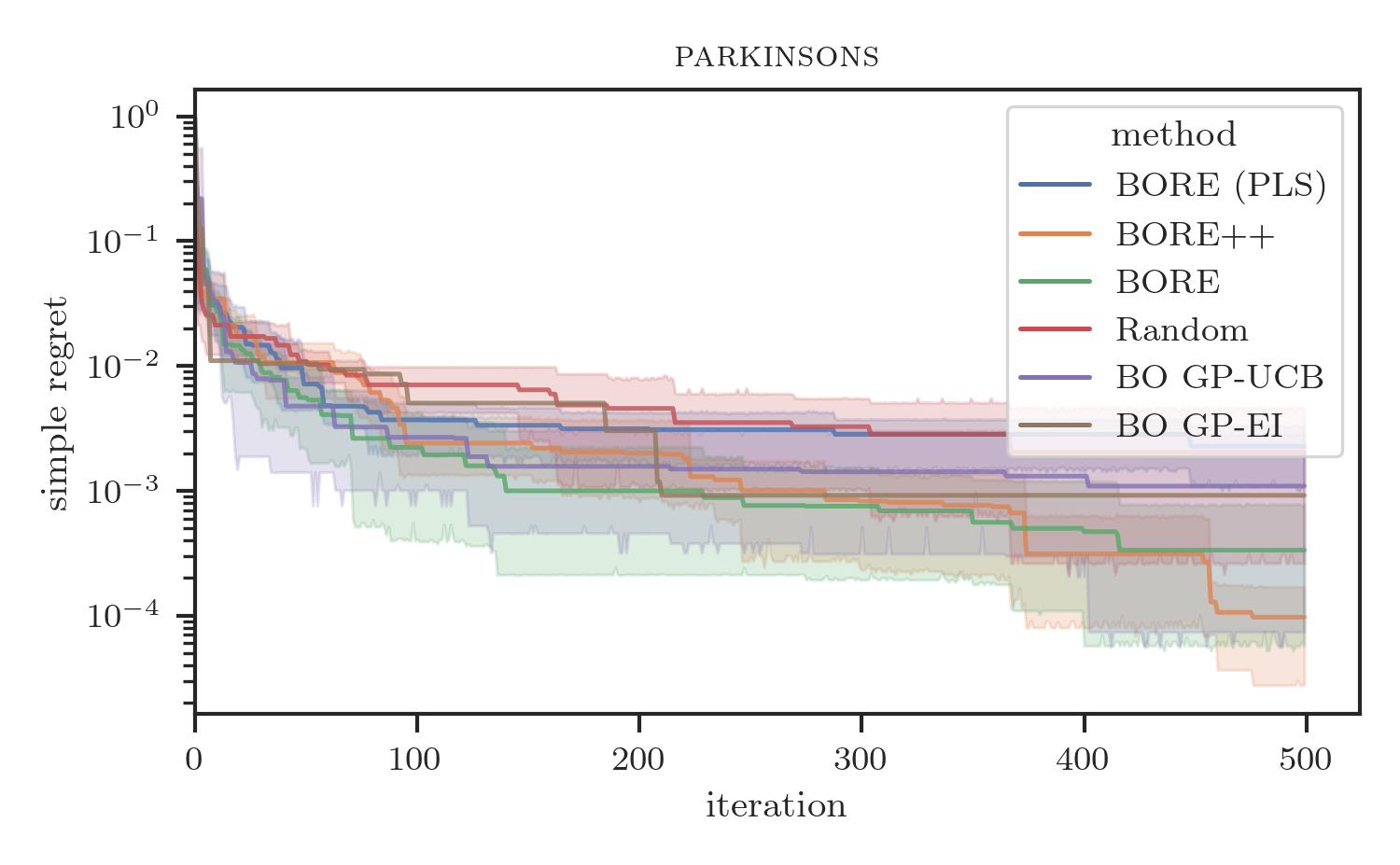}\label{fig:parkinsons}}%
	\subfloat[Hyper-parameter tuning on CT slice dataset]{\includegraphics[width=0.5\textwidth]{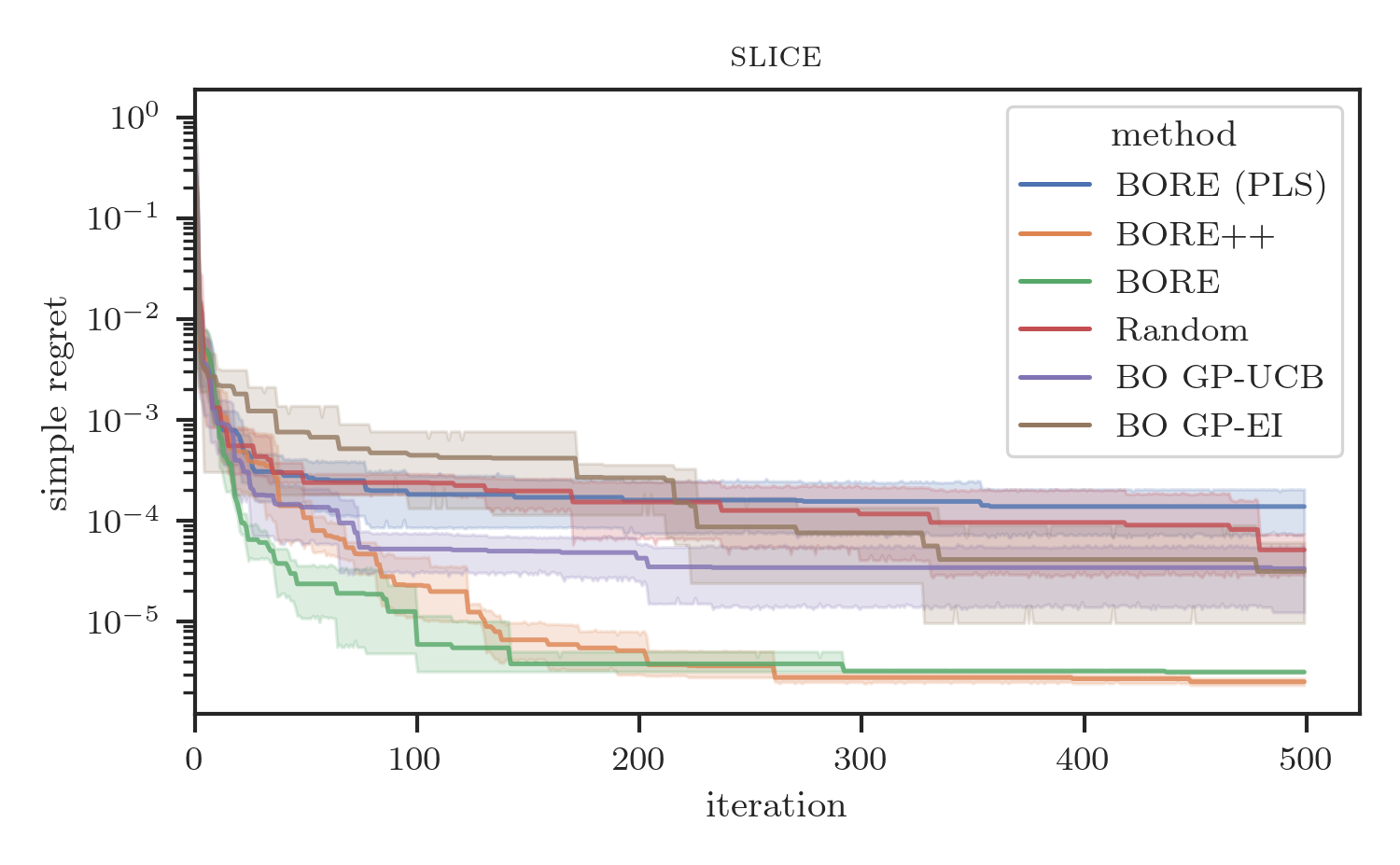}\label{fig:slice}}
	
	\subfloat[Racing line optimisation]{\includegraphics[width=0.5\textwidth]{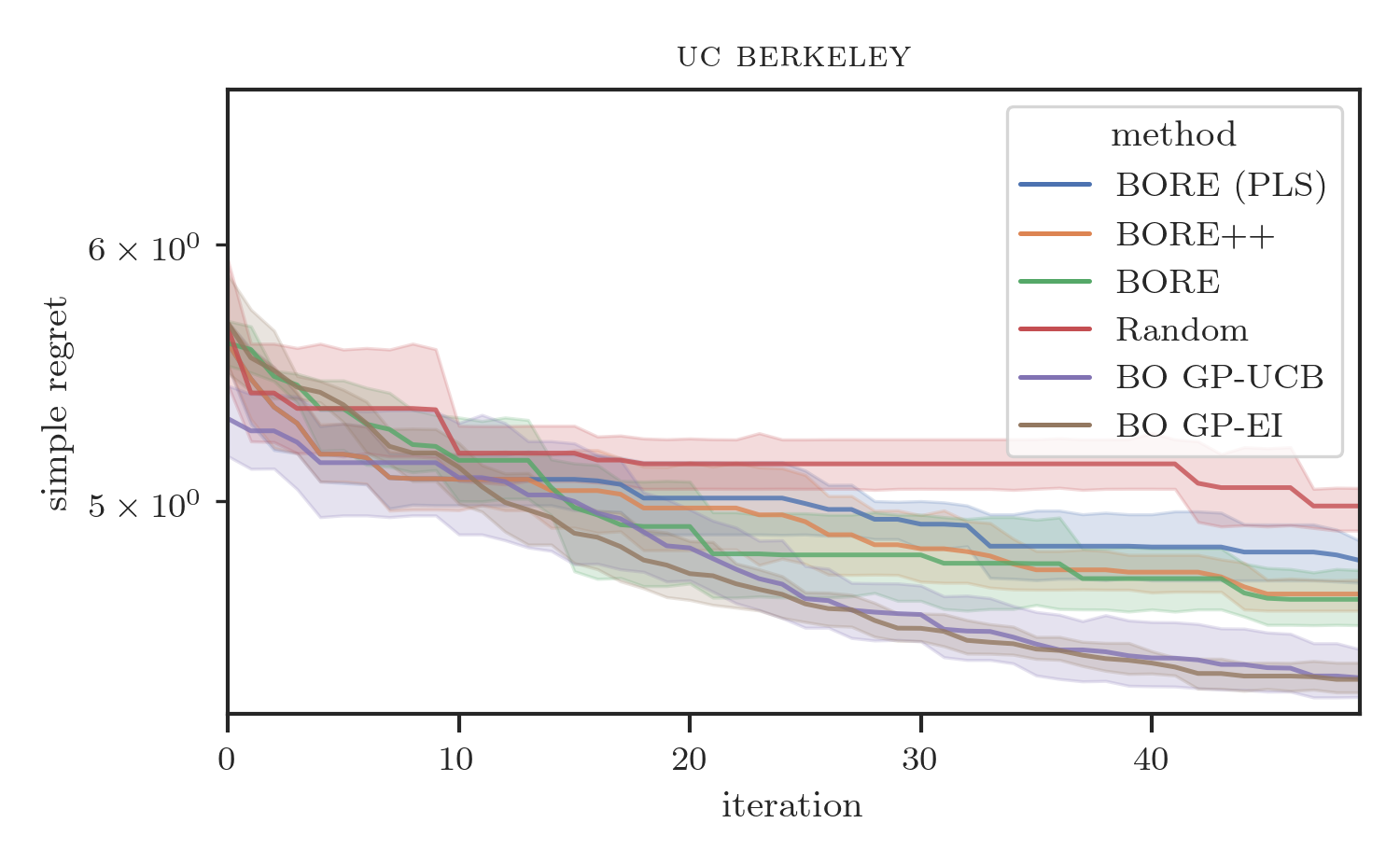}\label{fig:racing}}
	\subfloat[Neural architecture search on MNIST data]{\includegraphics[width=0.5\textwidth]{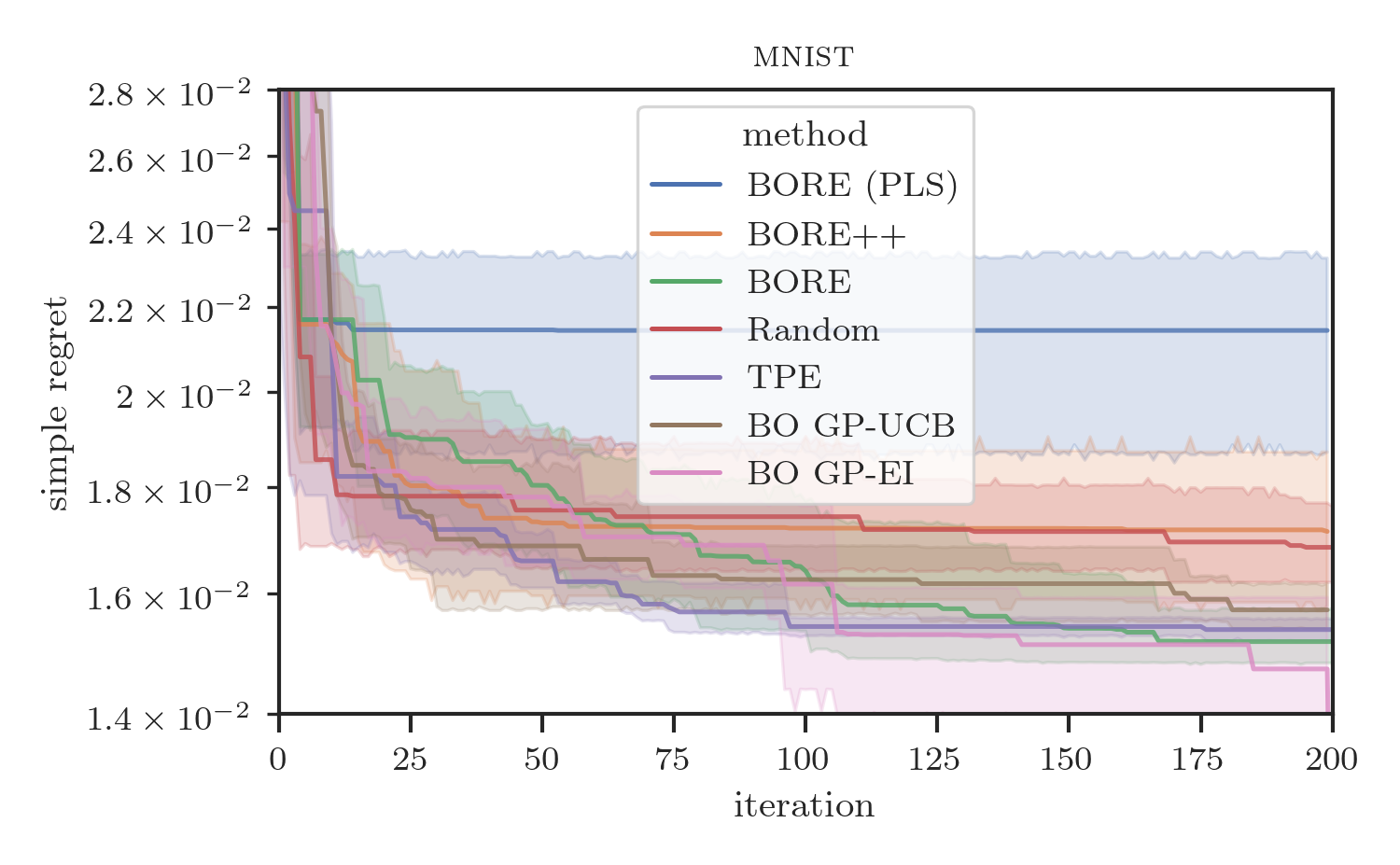}\label{fig:mnist}}
	\caption{Experimental results on real-data benchmarks. The plots show each algorithm's simple regret averaged across multiple runs. The shaded areas correspond to the 95\% confidence intervals.}
	\label{fig:real-data}
\end{figure}

Results are presented in \autoref{fig:real-data}. As the plots show, BORE++ presents significantly better performance than BORE in the probabilistic least-squares (PLS) setting (i.e., $\beta_\iterIdx := 0$), as the theoretical results suggested. In fact, it is possible to note that BORE (PLS) performs comparably to (or at times worse than) random search, indicating that the optimal least-squares classifier by itself is unable to properly capture the epistemic uncertainty. By using a neural network classifier trained via gradient descent and a different loss function (cross-entropy), the original BORE is still able to achieve top performance in most benchmarks. Both BORE versions are only surpassed by traditional GP-based BO on the racing line optimisation problem, as observed in \citet{tiao2021bore}, due to the inherent smoothness the problem, and in the final iterations of the neural architecture search problem by GP-EI. Interestingly, even though restricted to the kernel-based PLS setting, we observe that BORE++ is able to surpass the original BORE in the neural network hyper-parameter tuning problems (\textsc{slice} and \textsc{parkinsons}), while maintaining similar performance in other tasks. These results confirm that improved uncertainty estimates can lead to practical performance gains.

%

\section{Conclusion}
This paper presented an extension of the BORE framework to the batch setting alongside the theoretical analysis of the proposed extension and an improvement over the original BORE. Theoretical results in terms of regret bounds and experiments show that BORE methods are able to maintain performance guarantees while outperforming traditional BO baselines. The main purpose of this work, however, was to establish the theoretical foundations for the analysis and derivation of new algorithmic frameworks for Bayesian optimisation via density-ratio estimation, equipping BO with new tools based on probabilistic classification, instead of regression models.

As future work, we plan to investigate the theoretical properties of BORE under different loss functions and analyse other batch design strategies, e.g., sampling methods for combinatorial domains. The theoretical contributions of this work can also be extended to other versions of BORE, such as its recent multi-objective version \citep{DeAth2022mbore}, and provide insights into other likelihood-free BO methods \citep{Song2022lfbo}. Finally, we consider integrating BORE++ with other probabilistic classification models equipped with predictive uncertainty estimates, such as neural network ensembles \citep{Lakshminarayanan2017}, random forests \citep{Balog2015}, and Bayesian generalised linear models, which should lead to improvements in scalability and additional performance gains.

\begin{ack}
Rafael Oliveira was supported by the Medical Research Future Fund Applied Artificial Intelligence in Health Care grant (MRFAI000097) by the Australian Department of Health and Aged Care.
\end{ack}

	
	\bibliographystyle{unsrtnat}
	\bibliography{references,extra}

	\section*{Appendix}
\appendix

\counterwithin{equation}{section}
\counterwithin{lemma}{section}
\counterwithin{proposition}{section}
\counterwithin{assumption}{section}
\counterwithin{theorem}{section}

\counterwithin{figure}{section}
\counterwithin{table}{section}

This appendix complements the main paper with proofs, experiment details and additional experiments and discussions.  \autoref{sec:proofs} presents full proofs for the main theoretical results in the paper. In \autoref{sec:bore-thompson}, we discuss an approach to derive alternative regret bounds for BORE under a Thompson sampling perspective. We discuss the theoretical analysis of BORE with its non-constant approximation for the observations quantile $\quantile$ in \autoref{sec:non-constant}. In \autoref{sec:exp-details}, we present further details on the experiments setup. Finally, \autoref{sec:extra-exp} presents an additional experiment assessing dimensionality effects.

\appendix

\section{Proofs}
\label{sec:proofs}
This section presents proofs for the main theoretical results in the paper. We start with a few auxiliary results from the GP-UCB literature \citep{Srinivas2010, Chowdhury2017}, following up with the proofs for the main theorems.

\subsection{Auxiliary results}
\label{sec:proofs-aux}

\begin{lemma}[{\citet[Lemma 5.3]{Srinivas2010}}]
	\label{thr:ig}
	The information gain for a sequence of $\nObs \geq 1$ observations $\{\location_i, \clabel_i\}_{i=1}^\nObs$, where $\clabel_i = f(\location_i) + \gpnoise_i$, $\gpnoise_i\sim\normal(0,\regFactor)$, can be expressed in terms of the predictive variances. Namely, if $f\sim\gp(\gpMeanFunction,k)$, then the information gain provided by the observations is such that:
	\begin{equation}
		\mutinfo{\vec\clabel_\nObs,\vec{f}_\nObs|\locDomain_\nObs} = \frac{1}{2}\sum_{i=1}^\nObs \log(1+\regFactor^{-1}\sigma^2_{i-1}(\location_i))~,
	\end{equation}
	where $\vec{f}_\nObs := [f(\location_i)]_{i=1}^\nObs$ and $\locDomain_\nObs := \{\location_i\}_{i=1}^\nObs \subset \locDomain$.
\end{lemma}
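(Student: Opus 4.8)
The plan is to prove this identity by the standard entropy-decomposition argument for Gaussian processes, exploiting the fact that a GP conditioned on a finite set of observations remains Gaussian with a closed-form predictive variance. First I would write the information gain as a difference of differential entropies,
\begin{equation}
	\mutinfo{\vec\clabel_\nObs,\vec{f}_\nObs|\locDomain_\nObs} = H(\vec\clabel_\nObs) - H(\vec\clabel_\nObs\mid \vec{f}_\nObs)\,,
\end{equation}
noting that the observations depend on the latent function only through the values $\vec{f}_\nObs$, so conditioning on the finite vector $\vec{f}_\nObs$ is equivalent to conditioning on the whole sample path $\fValue$ for the purposes of this entropy.

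The second term is immediate: given $\vec{f}_\nObs$, we have $\clabel_i = \fValue(\location_i) + \gpnoise_i$ with $\gpnoise_i$ independent $\normal(0,\regFactor)$ noise, so $\vec\clabel_\nObs\mid\vec{f}_\nObs$ is an isotropic Gaussian and $H(\vec\clabel_\nObs\mid\vec{f}_\nObs) = \frac{\nObs}{2}\log(2\pi e\regFactor)$. For the first term, I would apply the chain rule for differential entropy to write $H(\vec\clabel_\nObs) = \sum_{i=1}^\nObs H(\clabel_i\mid\vec\clabel_{1:i-1})$ and then evaluate each conditional entropy. Since $\fValue\sim\gp(\gpMeanFunction,k)$ and the noise is Gaussian, the predictive distribution of $\clabel_i$ given the first $i-1$ observations is again Gaussian, with variance equal to the GP posterior variance at $\location_i$ plus the noise variance, namely $\sigma^2_{i-1}(\location_i) + \regFactor$. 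Hence $H(\clabel_i\mid\vec\clabel_{1:i-1}) = \frac{1}{2}\log(2\pi e(\sigma^2_{i-1}(\location_i)+\regFactor))$.

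Summing these contributions and subtracting the noise entropy then telescopes into
\begin{equation}
	\mutinfo{\vec\clabel_\nObs,\vec{f}_\nObs|\locDomain_\nObs} = \frac{1}{2}\sum_{i=1}^\nObs \log\frac{\sigma^2_{i-1}(\location_i)+\regFactor}{\regFactor} = \frac{1}{2}\sum_{i=1}^\nObs \log(1+\regFactor^{-1}\sigma^2_{i-1}(\location_i))\,,
\end{equation}
as claimed. The only point requiring care is the evaluation of the one-step predictive variance $\sigma^2_{i-1}(\location_i)+\regFactor$, which I would justify via the standard Gaussian-conditioning formula, checking that the posterior variance of $\fValue(\location_i)$ after observing $\vec\clabel_{1:i-1}$ coincides with the $\sigma^2_{i-1}$ defined through the kernel matrices in the recursive GP regression scheme. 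Because the lemma conditions on the fixed location set $\locDomain_\nObs$, no adaptivity or predictability subtleties enter here, so this step is bookkeeping rather than a genuine obstacle; the argument is essentially a verification of known GP identities.
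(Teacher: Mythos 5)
Your argument is correct and is essentially the proof of the cited source (Srinivas et al., Lemma 5.3), which the paper itself invokes without reproving: the mutual information is decomposed as $H(\vec\clabel_\nObs)-H(\vec\clabel_\nObs\mid\vec{f}_\nObs)$, the chain rule gives one-step Gaussian conditionals with variance $\sigma^2_{i-1}(\location_i)+\regFactor$, and the noise entropy cancels to yield the stated sum. No gaps; the only implicit step, that the one-step predictive variance equals the kernel-based $\sigma^2_{i-1}$, is the standard Gaussian conditioning identity you already flag.
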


\begin{lemma}[{\citet[Lemma 4]{Chowdhury2017}}]
	\label{thr:variance-sum-bound}
	Following the setting of \autoref{thr:ig}, the sum of predictive standard deviations at a sequence of $\nObs$ points is bounded in terms of the maximum information gain:
	\begin{equation}
		\sum_{i=1}^\nObs \sigma_{i-1}(\location_i) \leq \sqrt{4(\nObs+2)\mig_\nObs}\,.
	\end{equation}
\end{lemma}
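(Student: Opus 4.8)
The plan is to combine the information-gain identity of \autoref{thr:ig} with two elementary inequalities: the Cauchy--Schwarz inequality, to convert the sum of standard deviations into the square root of a sum of variances, and a logarithmic bound that relates each predictive variance to the corresponding term of the information gain. First, I would apply Cauchy--Schwarz to the $\nObs$-term sum,
\begin{equation}
	\sum_{i=1}^\nObs \sigma_{i-1}(\location_i) \leq \sqrt{\nObs \sum_{i=1}^\nObs \sigma_{i-1}^2(\location_i)}\,,
\end{equation}
so that it remains only to bound $\sum_{i=1}^\nObs \sigma_{i-1}^2(\location_i)$ in terms of $\mig_\nObs$.

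Second, I would exploit the normalisation $k(\location,\location)\leq 1$, which forces $\sigma_{i-1}^2(\location_i)\leq k(\location_i,\location_i)\leq 1$, hence $\regFactor^{-1}\sigma_{i-1}^2(\location_i)\leq 1$ whenever $\regFactor \geq 1$. On the interval $[0,1]$ the elementary inequality $y \leq 2\log(1+y)$ holds (the difference $2\log(1+y)-y$ vanishes at $y=0$ and has non-negative derivative $\tfrac{1-y}{1+y}$ there), so applying it with $y = \regFactor^{-1}\sigma_{i-1}^2(\location_i)$ gives
\begin{equation}
	\sigma_{i-1}^2(\location_i) \leq 2\regFactor\log\left(1+\regFactor^{-1}\sigma_{i-1}^2(\location_i)\right)\,.
\end{equation}

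Third, I would sum over $i$ and recognise the right-hand side, via \autoref{thr:ig}, as twice the information gain of the realised observation sequence, which is in turn dominated by the maximum information gain:
\begin{equation}
	\sum_{i=1}^\nObs \sigma_{i-1}^2(\location_i) \leq 4\regFactor\cdot\frac{1}{2}\sum_{i=1}^\nObs\log\left(1+\regFactor^{-1}\sigma_{i-1}^2(\location_i)\right) = 4\regFactor\, \mutinfo{\vec\clabel_\nObs,\vec f_\nObs|\locDomain_\nObs} \leq 4\regFactor\, \mig_\nObs\,.
\end{equation}
Substituting back into the Cauchy--Schwarz bound yields $\sum_{i=1}^\nObs \sigma_{i-1}(\location_i) \leq \sqrt{4\regFactor\nObs\,\mig_\nObs}$, and the advertised constant $4(\nObs+2)$ follows from the calibrated choice $\regFactor = 1 + 2/\nObs$, since then $4\regFactor\nObs = 4(\nObs+2)$.

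The only delicate point is the bookkeeping around this constant: the tidy form $\sqrt{4(\nObs+2)\mig_\nObs}$ relies both on having $\regFactor \geq 1$, so that $\regFactor^{-1}\sigma_{i-1}^2 \leq 1$ and the bound $y \leq 2\log(1+y)$ applies on the correct range, and on the specific value $\regFactor = 1 + 2/\nObs$. For a general $\regFactor$ one would instead invoke the sharper but less convenient inequality $y \leq \tfrac{\regFactor^{-1}}{\log(1+\regFactor^{-1})}\log(1+y)$, valid for $y \in [0,\regFactor^{-1}]$ because $y \mapsto \tfrac{y}{\log(1+y)}$ is increasing, which leads to $\sum_i \sigma_{i-1}(\location_i) \leq \sqrt{2\nObs\mig_\nObs/\log(1+\regFactor^{-1})}$. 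I expect reconciling this expression with the stated constant to be the main (purely computational) obstacle, whereas the three structural steps above are routine.
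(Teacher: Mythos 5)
Your proposal is correct and takes essentially the same route as the paper, which states this lemma purely as a citation to \citet[Lemma 4]{Chowdhury2017} without reproducing a proof: your three steps (Cauchy--Schwarz, the elementary bound $y \le 2\log(1+y)$ on $[0,1]$, and the information-gain identity of \autoref{thr:ig}) are exactly Chowdhury and Gopalan's derivation, including the calibration $\regFactor = 1 + 2/\nObs$ that yields the constant $4(\nObs+2)$. Your closing caveat is also well placed: since the paper elsewhere treats $\regFactor > 0$ as a free parameter (e.g.\ $\regFactor = 0.025$ in its experiments), the stated constant implicitly requires $1 \le \regFactor \le 1 + 2/\nObs$, and for small $\regFactor$ your alternative bound $\sqrt{2\nObs\mig_\nObs/\log(1+\regFactor^{-1})}$ is the one that actually holds.
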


\begin{lemma}
	\label{thr:batch-variance}
	Let $\set A \subset\locDomain$ be a finite set of points where a function $f\sim\gp(\gpMeanFunction,k)$ was evaluated, so that the GP posterior covariance function and the corresponding variance are given by:
	\begin{align}
		k_{\set A}(\location,\location') &:= k(\location,\location') - k(\location, \set A)^\transpose(\mat k (\set A) + \eta\eye)^{-1} k(\set A, \location')\\
		\sigma_{\set A}^2(\location) &:= k_{\set A}(\location,\location)\,,\quad\location, \location' \in\locDomain,
	\end{align}
	where $k(\location, \set A) := [k(\location,\vec a)]_{\vec a\in\set A}$ and $\mat k (\set A) := [k(\vec a, \vec a')]_{\vec a, \vec a' \in \set A}$.
	Then, for any given set $\set B \supset \set A$ of evaluations of $f$, we have:
	\begin{equation}
		\sigma_{\set B}^2(\location) \leq \sigma_{\set A}^2(\location), \quad \forall \location \in \locDomain\,.
	\end{equation}
\end{lemma}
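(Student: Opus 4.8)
The plan is to establish monotonicity of the posterior variance under enlarging the conditioning set by reducing to the incremental case and then invoking a Schur-complement identity. First I would write $\set C := \set B \setminus \set A$, so that $\set B = \set A \cup \set C$ where $\set C$ collects the newly added evaluation points, and reduce the claim to showing that conditioning \emph{additionally} on the evaluations at $\set C$ cannot increase the variance at any $\location$.

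The key fact I would exploit is the consistency (chain rule) of Gaussian conditioning: conditioning the prior $\gp(\gpMeanFunction, k)$ on the evaluations at $\set A$ yields again a Gaussian process, now with covariance function $k_{\set A}$ as defined in the statement. Conditioning this posterior process further on the evaluations at $\set C$, with the same observation noise $\eta$, therefore reproduces the full posterior given $\set B$, whose covariance is
\begin{equation}
	k_{\set B}(\location,\location') = k_{\set A}(\location,\location') - k_{\set A}(\location,\set C)^\transpose(\mat k_{\set A}(\set C) + \eta\eye)^{-1} k_{\set A}(\set C,\location')\,,
\end{equation}
where $k_{\set A}(\location, \set C) := [k_{\set A}(\location, \vec a)]_{\vec a\in\set C}$ and $\mat k_{\set A}(\set C) := [k_{\set A}(\vec a, \vec a')]_{\vec a, \vec a'\in\set C}$. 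Setting $\location' = \location$ gives
\begin{equation}
	\sigma_{\set B}^2(\location) = \sigma_{\set A}^2(\location) - k_{\set A}(\location,\set C)^\transpose(\mat k_{\set A}(\set C) + \eta\eye)^{-1} k_{\set A}(\set C,\location)\,.
\end{equation}
Since $\mat k_{\set A}(\set C)$ is a kernel Gram matrix it is positive semidefinite, so with $\eta > 0$ the matrix $\mat k_{\set A}(\set C) + \eta\eye$ is positive definite; hence its inverse is positive definite and the subtracted quadratic form is nonnegative. This yields $\sigma_{\set B}^2(\location) \leq \sigma_{\set A}^2(\location)$ for every $\location\in\locDomain$, as required.

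The main obstacle is justifying the chain-rule identity itself, i.e.\ that two-step conditioning (on $\set A$, then on $\set C$ with the updated kernel $k_{\set A}$) coincides with one-step conditioning on $\set B$. I would derive this from block-matrix inversion of the joint regularised Gram matrix $\mat k(\set B) + \eta\eye$ partitioned along the $\set A$ and $\set C$ blocks, where the Schur complement of the $\set A$-block is exactly $\mat k_{\set A}(\set C) + \eta\eye$. As a cross-check, the same conclusion follows from the law of total variance, $\sigma_{\set A}^2(\location) = \variance[\expectation[f(\location)\mid \set B]\mid \set A] + \sigma_{\set B}^2(\location)$, which is valid because GP posterior variances are deterministic (data-independent) and whose first term is manifestly nonnegative. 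The only edge case to track is positivity of $\eta$; if one wished to allow $\eta = 0$, the inverse would be replaced by a Moore--Penrose pseudoinverse, which remains positive semidefinite and leaves the argument intact.
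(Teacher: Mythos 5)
Your proof is correct and follows essentially the same route as the paper: both decompose $\sigma_{\set B}^2(\location) = \sigma_{\set A}^2(\location) - k_{\set A}(\location,\set C)^\transpose(\mat k_{\set A}(\set C)+\eta\eye)^{-1}k_{\set A}(\set C,\location)$ via the two-step (prior-becomes-posterior) conditioning identity and conclude from nonnegativity of the subtracted quadratic form. You are in fact more careful than the paper, which simply asserts this identity, whereas you propose to justify it via the Schur complement of the $\set A$-block of $\mat k(\set B)+\eta\eye$ and cross-check with the law of total variance.
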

\begin{proof}
	The result follows by observing that the GP posterior given observations at $\set A$ is a prior for the GP with the new observations at the complement $\set{C} := \set B \setminus \set A$. Then we obtain, for all $\location\in\locDomain$:
	\begin{equation}
		\begin{split}
			\sigma_{\set B}^2(\location) &:= k(\location,\location) - k(\location,\set B)(\mat k(\set B) + \eta\eye)^{-1}k(\set B,\location)\\
			&= \sigma_{\set A}^2(\location) - k_{\set A}(\location, \set C)(\mat k_{\set A}(\set C) + \eta \eye)^{-1} k_{\set A}(\set C, \location)\\
			&\leq \sigma_{\set A}^2(\location)\,,
		\end{split}
	\end{equation}
	since $k_{\set A}(\location, \set C)(\mat k_{\set A}(\set C) + \eta \eye)^{-1} k_{\set A}(\set C, \location)$ is non-negative. 
\end{proof}

\subsection{Proof of Theorem 1}
\begin{proof}[Proof of Theorem 1]
	The proof follows by a simple application of \citet[Thm. 1]{Durand2018} on GP-UCB to our settings, as $\classifier\in\Hspace$ and the stochastic process defining the query locations $\location_\iterIdx$ and observation noise $\gpnoise_\iterIdx := \clabel_\iterIdx - \classifier(\location_\iterIdx)$ satisfies their assumptions of sub-Gaussianity. In particular, $\gpnoise_\iterIdx$ is $\sigma_\gpnoise$-sub-Gaussian with $\sigma_\gpnoise \leq 1$, since $|\gpnoise_\iterIdx| \leq 1$, for all $\iterIdx\geq 1$ \citep{Boucheron2013}.
\end{proof}

\subsection{Proof of Theorem 2}
To prove Theorem 2, we will follow the procedure of GP-UCB proofs \citep{Srinivas2010, Chowdhury2017} by bounding the approximation error $|\classifier(\location) - \hat\classifier_\iterIdx(\location)|$ via a confidence bound (Theorem 1) and then applying it to the instant regret. From the instant regret to the cumulative regret, the bounds are extended by means of the maximum information gain $\mig_\nIterations$ introduced in the main text. One of the differences with our proof, however, is that BORE with a PLS classifier is not following the optimal UCB policy, but instead a pure-exploitation approach by following the maximum of the mean estimator $\hat\classifier_\iterIdx$, which does not account for uncertainty.

\begin{proof}[Proof of Theorem 2]
	Recalling the classifier-based bound in Section 4 and that for any $\quantile\in\R$ the result in Lemma 1 holds, we have:
	\begin{equation}
		\begin{split}
			\regret_\iterIdx &= f(\location_\iterIdx) - f(\location^*)\\
			&\leq\Lipschitz_\obsNoise(\classifier(\location^*) - \classifier(\location_\iterIdx))
			\label{eq:classifier-regret}
		\end{split}
	\end{equation}
	According to Theorem 1, working with the confidence bounds on $\classifier(\location)$, we then have that the instant regret is bounded with probability at least $1-\delta$ by:
	\begin{equation}
		\begin{split}
			\forall\iterIdx\geq 1,\quad \regret_\iterIdx &\leq \Lipschitz_\obsNoise (\hat\classifier_{\iterIdx-1}(\location^*) + \beta_{\iterIdx-1}(\delta)\sigma_{\iterIdx-1}(\location^*) - \classifier(\location_\iterIdx))\\
			&\leq \Lipschitz_\obsNoise (\hat\classifier_{\iterIdx-1}(\location^*) + \beta_{\iterIdx-1}(\delta)\sigma_{\iterIdx-1}(\location^*) - \hat\classifier_{\iterIdx-1}(\location_\iterIdx) + \beta_{\iterIdx-1}(\delta)\sigma_{\iterIdx-1}(\location_\iterIdx))\\
			&\leq \Lipschitz_\obsNoise\beta_{\iterIdx-1}(\delta)(\sigma_{\iterIdx-1}(\location^*) + \sigma_{\iterIdx-1}(\location_\iterIdx)),
		\end{split}
	\end{equation}
	since $\hat\classifier_{\iterIdx-1}(\location^*) \leq \max_{\location\in\locDomain}\hat\classifier_{\iterIdx-1}(\location) = \hat\classifier_{\iterIdx-1}(\location_\iterIdx)$.
	Now we can apply \autoref{thr:variance-sum-bound}, yielding with probability at least $1-\delta$:
	\begin{equation}
		\begin{split}
			\Regret_\nIterations := \sum_{\iterIdx=1}^\nIterations \regret_\iterIdx &\leq \Lipschitz_\obsNoise \beta_\nIterations(\delta)\sum_{\iterIdx=1}^\nIterations (\sigma_{\iterIdx-1}(\location_\iterIdx) + \sigma_{\iterIdx-1}(\location^*))\\
			&\leq \Lipschitz_\obsNoise \beta_\nIterations(\delta) \left( \sqrt{4(\nIterations+2)\mig_\nIterations} + \sum_{\iterIdx=1}^\nIterations \sigma_{\iterIdx-1}(\location^*) \right)
		\end{split}
	\end{equation}
	since $\beta_{\iterIdx}(\delta) \leq \beta_{\iterIdx+1}(\delta)$ for all $\iterIdx\geq 1$. This concludes the proof.
\end{proof}

\subsection{Proof of Theorem 3}
Again, we will be following standard GP-UCB proofs for this result using the bound in Theorem 1.

\begin{proof}[Proof of Theorem 3]
	Extending the bound in \autoref{eq:classifier-regret} with Theorem 1, we have with probability at least $1-\delta$:
	\begin{equation}
		\begin{split}
			\forall\iterIdx\geq 1,\quad \regret_\iterIdx &\leq \Lipschitz_\obsNoise (\hat\classifier_{\iterIdx-1}(\location^*) + \beta_{\iterIdx-1}(\delta)\sigma_{\iterIdx-1}(\location^*) - \classifier_{\iterIdx-1}^*(\location_\iterIdx))\\
			&\leq \Lipschitz_\obsNoise (\hat\classifier_{\iterIdx-1}(\location^*) + \beta_{\iterIdx-1}(\delta)\sigma_{\iterIdx-1}(\location^*) - \hat\classifier_{\iterIdx-1}(\location_\iterIdx) + \beta_{\iterIdx-1}(\delta)\sigma_{\iterIdx-1}(\location_\iterIdx))\\
			&\leq 2\Lipschitz_\obsNoise\beta_{\iterIdx-1}(\delta)\sigma_{\iterIdx-1}(\location_\iterIdx),
		\end{split}
	\end{equation}
	since $\hat\classifier_{\iterIdx-1}(\location^*) + \beta_{\iterIdx-1}(\delta)\sigma_{\iterIdx-1}(\location^*) \leq \max_{\location\in\locDomain} \hat\classifier_{\iterIdx-1}(\location) + \beta_{\iterIdx-1}(\delta)\sigma_{\iterIdx-1}(\location) = \hat\classifier_{\iterIdx-1}(\location_\iterIdx) + \beta_{\iterIdx-1}(\delta)\sigma_{\iterIdx-1}(\location_\iterIdx))$. Turning our attention to the cumulative regret, with the same probability, we have:
	\begin{equation}
		\begin{split}
			\Regret_\nIterations := \sum_{\iterIdx=1}^\nIterations \regret_\iterIdx &\leq 2\Lipschitz_\obsNoise \beta_\nIterations(\delta)\sum_{\iterIdx=1}^\nIterations \sigma_{\iterIdx-1}(\location_\iterIdx)\\
			&\leq 4\Lipschitz_\obsNoise \beta_\nIterations(\delta) \sqrt{(\nIterations+2)\mig_\nIterations},
		\end{split}
	\end{equation}
	which concludes the proof.
\end{proof}

\subsection{Proof of Theorem 4}

\begin{proof}
	Starting with the regret definition, we can define a bound in terms of the discrepancy between the two sampling distributions:
	\begin{equation}
		\begin{split}
			\regret_\iterIdx &\coloneqq \expectation_{\location \sim \hat{p}_\iterIdx}[\objective(\location)] - \expectation_{\location\sim \lowerpdf}[\objective(\location)]\\
			&\leq \Lipschitz_\obsNoise\left(\expectation_{\location\sim \lowerpdf}[\classifier(\location)] - \expectation_{\location \sim \hat{p}_\iterIdx}[\classifier(\location)]\right)\\
			&\leq \Lipschitz_\obsNoise \norm{\classifier}_\infty \int_\locDomain \lvert \lowerpdf(\location) - \variational_{\iterIdx-1}(\location)\rvert \diff\location\\
			&\leq \Lipschitz_\obsNoise \norm{\classifier}_\infty \sqrt{\frac{1}{2}\kl{\variational_{\iterIdx-1}}{\lowerpdf}}, \qquad \forall \iterIdx\geq 1,
		\end{split}
	\end{equation}
	where the last line is due to Pinsker's inequality \citep{Boucheron2013} applied to the total variation distance between $\hat p_\iterIdx$ and $\lowerpdf$ (third line).
	
	Tp obtain a bound on $\kl{\hat p_\iterIdx}{\lowerpdf}$, starting from the definition of the terms, with probability at least $1-\delta$, we have that:
	\begin{equation}
		\begin{split}
			\forall\iterIdx\geq 0, \quad \kl{\hat p_{\iterIdx}}{\lowerpdf} &= \expectation_{\location\sim\hat p_\iterIdx}[\log \hat p_\iterIdx(\location) - \log \lowerpdf(\location)]\\
			&= \expectation_{\location\sim\hat p_\iterIdx}[\log(\hat\classifier_\iterIdx(\location) + \beta_\iterIdx(\delta)\sigma_\iterIdx(\location)) - \log\classifier(\location) + \log\normalisation_\classifier - \log\percentile]\\
			&\leq \expectation_{\location\sim\hat p_\iterIdx}[\log(\hat\classifier_\iterIdx(\location) + \beta_\iterIdx(\delta)\sigma_\iterIdx(\location)) - \log\classifier(\location)]\,,
		\end{split}
	\end{equation}
	which follows from $\normalisation_\iterIdx := \int_\locDomain (\hat{\classifier}_\iterIdx(\location)+\beta_\iterIdx(\delta)\sigma_\iterIdx(\location))p(\location)\diff\location \geq \int_\locDomain\classifier(\location)p(\location)\diff\location =: \percentile$. Now, by the mean value theorem \citep{Munkres1975}, for all $\iterIdx\geq 0$, we have that the following holds with the same probability:
	\begin{equation}
		\begin{split}
			|\log(\hat\classifier_\iterIdx(\location) + \beta_\iterIdx(\delta)\sigma_\iterIdx(\location)) - \log\classifier(\location)|
			&\leq \Lipschitz_\classifier|\hat\classifier_\iterIdx(\location) + \beta_\iterIdx(\delta)\sigma_\iterIdx(\location) - \classifier(\location)|\\
			&\leq 2\Lipschitz_\classifier\beta_\iterIdx(\delta)\sigma_\iterIdx(\location)\,, \quad \forall\location\in\locDomain\,,
		\end{split}
	\end{equation}
	since $\frac{\diff\log(\anyscalar)}{\diff\anyscalar} < \Lipschitz_\classifier < \infty$ for all $\anyscalar > \min_{\location\in\locDomain}\classifier(\location) > 0$, and $|\hat\classifier_\iterIdx(\location) - \classifier(\location)| \leq \beta_\iterIdx(\delta)\sigma_\iterIdx(\location)$ by Theorem 1. The first result in the theorem then follows.

	For the second part of the result, we first note that:
	\begin{equation}
		\forall \nIterations\geq 1, \quad \min_{\iterIdx\leq \nIterations} \kl{\hat p_\iterIdx}{\lowerpdf} \leq \frac{1}{\nIterations} \sum_{\iterIdx=1}^\nIterations \kl{\hat p_{\iterIdx-1}}{\lowerpdf}
	\end{equation}
	Following the previous derivations, it holds with probability at least $1-\delta$ that:
	\begin{equation}
		\begin{split}
			\sum_{\iterIdx=1}^\nIterations \kl{\hat p_\iterIdx}{\lowerpdf}
			&\leq 2\Lipschitz_\classifier\sum_{\iterIdx=1}^\nIterations \beta_{\iterIdx-1}(\delta)\expectation_{\vec{\tilde\location}_\iterIdx\sim \hat p_\iterIdx}[\sigma_{\iterIdx-1}(\vec{\tilde\location}_\iterIdx)]\\
			&\leq 2\Lipschitz_\classifier\beta_\nIterations(\delta)\sum_{\iterIdx=1}^\nIterations\expectation_{\vec{\tilde\location}_\iterIdx\sim q_\iterIdx}[\sigma_{\iterIdx-1}(\vec{\tilde\location}_\iterIdx)]\\
			&\leq 2\Lipschitz_\classifier\beta_\nIterations(\delta)\expectation_{\vec{\tilde\location}_1\sim q_1,\dots,\vec{\tilde\location}_\nIterations\sim q_\nIterations}\left[\sum_{\iterIdx=1}^\nIterations\sigma_{\iterIdx-1}(\vec{\tilde\location}_\iterIdx)\right]\,,
		\end{split}
		\label{eq:regret-sum-variance}
	\end{equation}
	since $\beta_\nIterations \geq \beta_{\iterIdx}$, for all $\iterIdx\leq \nIterations$, and expectations are linear operations. Considering the predictive variances above, recall that, at each iteration $\iterIdx\geq 1$, the algorithm selects a batch of \iid points $\set{B}_\iterIdx := \{\location_{\iterIdx,i}\}_{i=1}^\nSamples$, sampled from $\hat p_\iterIdx$, where to evaluate the objective function $\objective$. The predictive variance $\sigma_{\iterIdx-1}^2$ is conditioned on all previous observations, which are grouped by batches. We can then decompose, for any $\iterIdx\geq 1$:
	\begin{equation}
		\begin{split}
			\sigma_{\iterIdx}^2(\location) &= \sigma_{\iterIdx-1}^2(\location) - k_{\iterIdx-1}(\location,\set{B}_{\iterIdx})(\mat k_{\iterIdx-1}(\set{B}_{\iterIdx}) + \eta\eye)^{-1}k_{\iterIdx-1}(\set{B}_{\iterIdx}, \location)\,,
		\end{split}
	\end{equation}
	where we use the notation introduced in \autoref{thr:batch-variance}, and:
	\begin{align}
		\begin{split}
			k_{\iterIdx}(\location,\location') &= k_{\iterIdx-1}(\location,\location') - k_{\iterIdx-1}(\location, \set{B}_{\iterIdx})(\mat k_{\iterIdx-1}(\set{B}_{\iterIdx}) + \eta\eye)^{-1}k_{\iterIdx-1}(\set{B}_{\iterIdx}, \location'), \quad \iterIdx\geq 1,
		\end{split}\\
		k_0(\location,\location') &:= k(\location,\location')\,.
	\end{align}
	Therefore, the predictive variance of the batched algorithm is not the same as the predictive variance of a sequential algorithm, and we cannot direcly apply \autoref{thr:variance-sum-bound} to bound the last term in \autoref{eq:regret-sum-variance}.
	
	\autoref{thr:batch-variance} tells us that the predictive variance given a set of observations is less than the predictive variance given a subset of observations. Selecting only the first point from within each batch and applying \autoref{thr:batch-variance}, we get, for $\iterIdx\geq 1$:
	\begin{equation}
		\begin{split}
			\sigma^2_\iterIdx(\location) &\leq s_\iterIdx^2(\location):= k(\location,\location) - k(\location, \locDomain_\iterIdx)(\mat k(\locDomain_\iterIdx) + \eta\eye)^{-1}k(\locDomain_\iterIdx, \location)\,,
		\end{split}
	\end{equation}
	where $\locDomain_\iterIdx := \{\location_{i,1}\}_{i=1}^\iterIdx$, with $\location_{i,1} \in \set{B}_i$, $i\in\{1,\dots,\iterIdx\}$. Note that the right-hand side of the equation above is simply the non-batched GP predictive variance. Furthermore, sample points within a batch are \iid, so that $\location_{\iterIdx,1}\sim q_\iterIdx$ and $\vec{\tilde{\location}}_{\iterIdx}\sim q_\iterIdx$ are identically distributed. We can now apply \autoref{thr:variance-sum-bound}, yielding:
	\begin{equation}
		\begin{split}
			\expectation_{\vec{\tilde\location}_1\sim q_1,\dots,\vec{\tilde\location}_\nIterations\sim q_\nIterations}\left[\sum_{\iterIdx=1}^\nIterations\sigma_{\iterIdx-1}(\vec{\tilde\location}_\iterIdx)\right] \leq  \expectation_{\vec{\tilde\location}_1\sim q_1,\dots,\vec{\tilde\location}_\nIterations\sim q_\nIterations}\left[ \sum_{\iterIdx=1}^\nIterations s_{\iterIdx-1}(\vec{\tilde\location}_\iterIdx)\right]\leq 2\sqrt{(\nIterations+2)\mig_\nIterations}\,.
		\end{split}
	\end{equation}
	Combining this result with \autoref{eq:regret-sum-variance}, we obtain:
	\begin{equation}
		\sum_{\iterIdx=1}^\nIterations \kl{\hat p_\iterIdx}{\lowerpdf} \leq 4\Lipschitz_\classifier\beta_\nIterations(\delta)\sqrt{(\nIterations+2)\mig_\nIterations} \in \set{O}(\beta_\nIterations(\delta)\sqrt{\nIterations\mig_\nIterations})\,.
	\end{equation}
	Lastly, from the definition of $\beta_\iterIdx(\delta)$, we have:
	\begin{equation}
		\beta_\nIterations(\delta) := \bound + \sqrt{2\regFactor^{-1}\log(|\eye+\regFactor^{-1}\mat K_{\dataset_{\nIterations}}|^{1/2}/\delta)}\,,
	\end{equation}
	where:
	\begin{equation}
		\log(|\eye+\regFactor^{-1}\mat K_{\dataset_{\nIterations}}|^{1/2}) = \mutinfo{\vec\clabel_{\nObs_\nIterations}, \vec \gpfunction_{\nObs_\nIterations}} \leq \mig_{\nObs_\nIterations} = \mig_{\nSamples\nIterations}\,,
	\end{equation}
	for $\gpfunction\sim \gp(\gpMeanFunction,k)$. Therefore, the cumulative sum of divergences is such that:
	\begin{equation}
		\sum_{\iterIdx=1}^\nIterations \kl{\hat p_\iterIdx}{\lowerpdf} \in \set{O}(\sqrt{\nIterations}(b\sqrt{\mig_\nIterations}+\sqrt{\mig_\nIterations\mig_{\nSamples\nIterations}}))\,.
	\end{equation}
	which concludes the proof.
\end{proof}

\section{Bayesian regret bounds for BORE as Thompson sampling}
\label{sec:bore-thompson}
Although in our main results we considered BORE using an optimal classifier according to a least-squares loss, we may instead consider that, in practice, the trained classifier might be sub-optimal due to training via gradient descent. In particular, in the case of stochastic gradient descent, \citet{Mandt2017} showed that parameters learnt this way can be seen as approximate samples of a Bayesian posterior distribution. This is, therefore, the case of Thompson (or posterior) sampling \citep{Russo2016}. If we consider that the posterior over the model's function space is Gaussian, e.g., as in the case of infinitely-wide deep neural networks \citep{Jacot2018, Matthews2018}, we may instead analyse the original BORE as a GP-based Thompson sampling algorithm. We can then apply theoretical results from \citet{Russo2016} to use general GP-UCB approximation guarantees \citep{Srinivas2010, Grunewalder2010} to bound BORE'S Bayesian regret. Note, however, that this is a different type of analysis compared to the one presented in this paper, which considered a frequentist setting where the objective function is fixed, but unknown.


\section{Analysis with a non-constant quantile approximation}
\label{sec:non-constant}
Our main theoretical results so far relied upon the quantile $\quantile$ being fixed throughout all iterations $\iterIdx\in\{1,\dots\nIterations\}$, though in practice we have to approximate the quantile based on the empirical observations distribution up to time $\iterIdx\geq 1$. In this section, we discuss the plausibility of the theoretical results under this practical scenario.

The main impact of a time-varying quantile $\quantile_\iterIdx$, and the corresponding classifier $\classifier_\iterIdx(\location) := p(\observation\leq\quantile_\iterIdx|\location)$, in theoretical results is in the UCB approximation error (Theorem 1). This result depends on the observation noise $\gpnoise_{\iterIdx, i} := \clabel_{\iterIdx, i} - \classifier_\iterIdx(\location_i)$ as perceived by a GP model with observations $\clabel_{\iterIdx, i} := \indicator[\observation_i\leq\quantile_\iterIdx]$, $i\in\{1,\dots, \iterIdx\}$, to be sub-Gaussian when conditioned on the history. Hence, a few challenges originate from there. Firstly, the past observations in the vector $\vec\clabel_\iterIdx := [\clabel_{\iterIdx, i}]_{i=1}^\iterIdx$ are changing across iterations, due to the update in $\quantile_\iterIdx$. Secondly, the latent function $\classifier_\iterIdx$ is stochastic, as the quantile $\quantile_\iterIdx$ depends on the current set of observations $\observations_\iterIdx$. Lastly, it is not very clear how to define a filtration for the resulting stochastic process such that the GP noise $\gpnoise_{\iterIdx, i}$ is sub-Gaussian. Nevertheless, as the number of observations increases, $\quantile$ converges to a fixed value, making our asymptotic results valid.

\section{Experiment details}
\label{sec:exp-details}
This section presents details of our experiments setup. We used PyTorch \citep{PyTorch2019} for our implementation of batch BORE and BORE++, which we plan to make publicly available in the future.

\subsection{Theory assessment}
For this experiment, we generated a random classifier as an element of the RKHS $\Hspace$ defined by the kernel $k$ as:
\begin{equation}
	\classifier^* := \sum_{i=1}^\nFeatures \alpha_i k(\cdot, \location_i^*) \in \Hspace\,,
\end{equation}
where $\{\location_i^*\}_{i=1}^\nFeatures$ and the weights $\{\alpha_i\}_{i=1}^\nFeatures$ were \iid sampled from a unit uniform distribution $\uniform(0,1)$,  with $\nFeatures := 5$. The norm of $\classifier^*$ is given by:
\begin{equation}
	\norm{\classifier^*}_k = \sqrt{\vec\alpha^\transpose_\nFeatures \mat K_\nFeatures \vec\alpha_\nFeatures}\,,
\end{equation}
where $\mat k := [\location_i^*, \location_j^*]_{i,j=1}^\nFeatures \in \R^{\nFeatures\times\nFeatures}$ and $\vec\alpha_\nFeatures := [\alpha_1, \dots, \alpha_\nFeatures]^\transpose \in \R^\nFeatures$. To ensure $\classifier^*(\location) \leq 1$, we normalised the weights according to the classifier norm, i.e., $\vec{\alpha} := \frac{1}{\norm{\classifier^*}}\vec{\alpha}$, so that $\norm{\classifier^*} = 1$, and consequently $\classifier^*(\location) \leq k(\location, \location)\norm{\classifier^*} = \norm{\classifier^*} = 1$, for all $\location\in\locDomain$. The kernel was set as the squared exponential (RBF) with a length-scale of $0.1$.

Given a threshold $\quantile\in\R$, the objective function corresponding to $\classifier^*$ satisfies:
\begin{equation}
	\objective(\location) := \quantile - \cdf_\obsNoise^{-1}(\classifier^*(\location)), \quad \forall\location \in \locDomain\,.
\end{equation}
For this experiment, we set $\quantile := 0$. Each trial had a different objective function generated for it. An example of classifier and objective function pair is presented in Figure 1b (main paper). Observations were composed as function evaluations corrupted by zero-mean Gaussian noise with variance $\sigma_\obsNoise^2 := 0.01$.

The search space was configured as a finite set $\locDomain := \{\location_i\}_{i=1}^{N_\locDomain
}\subset [0, 1]$ by sampling $N_\locDomain$ points from a unit uniform distribution. The number of points in the search space was set as $N_\locDomain := 100$. As the search space is finite, we also know $\percentile := p(\observation \leq\quantile) = \int_\locDomain \classifier(\location)p(\location)\diff\location = \frac{1}{N_\locDomain}\sum_{\location\in\locDomain}\classifier^*(\location)$.

\begin{table}
	\centering
	\begin{tabular}{c|c}
		\textbf{Parameter} & \textbf{Setting}\\
		\hline
		$\regFactor$ & 0.025\\
		$\delta$ & 0.1\\
		$\quantile$ & 0\\
		\hline
	\end{tabular}
	\caption{Settings for BORE++ in the theory assessment experiment.}
	\label{tab:pls-params}
\end{table}

\begin{table}
	\centering
	\begin{tabular}{c|c}
		\textbf{Parameter} & \textbf{Setting}\\
		\hline
		$\regFactor$ & 0.01\\
		$\delta$ & 0.1\\
		$\sigma_\obsNoise^2$ & 0.01\\
		\hline
	\end{tabular}
	\caption{Settings for GP-UCB in the theory assessment experiment.}
	\label{tab:gpucb-params}
\end{table}

Regarding algorithm hyper-parameters, although any upper bound $\bound \geq \norm{\classifier^*}$ would work for setting up $\beta_\iterIdx$, BORE++ was configured with the RKHS norm $\classifier^*$ as the first term in the setting for $\beta_\iterIdx$ (see Theorem 1). To configure GP-UCB according to its theoretical settings \citep[Thm. 1]{Durand2018}, we computed the RKHS norm of the resulting $\objective$ in the RKHS. We can compute the norm of $\objective$ as an element of $\Hspace$ by solving the following constrained optimisation problem:
\begin{equation}
	\begin{split}
		\norm{\objective}_k &= \min_{\anyfunction\in\Hspace} \norm{\anyfunction}_k, \quad \mathrm{s.t.}\quad \anyfunction(\location) = \objective(\location), \quad \forall\location\in\locDomain\,.
	\end{split}
\end{equation}
As the search space is finite, the solution to this problem is available in closed form as:
\begin{equation}
	\norm{\objective}_k = \sqrt{\vec\objective^\transpose_\locDomain\mat k_\locDomain^{-1} \vec\objective_\locDomain}\,,
\end{equation}
where $\vec\objective_\locDomain := [\objective(\location)]_{\location\in\locDomain}\in\R^{N_\locDomain}$, and $\mat k_\locDomain := [k(\location, \location')]_{\location, \location'\in\locDomain}$. We set $\delta := 0.1$. For both BORE++ and GP-UCB, the information gain was recomputed at each iteration. Lastly, the regularisation factor $\regFactor$ was set as $\regFactor := \sigma_\obsNoise^2$ for GP-UCB and as $\regFactor := 0.025$ for BORE++, which was found by grid search. {In summary, for this experiment, the settings for BORE++ can be found in \autoref{tab:pls-params} and, for GP-UCB, in \autoref{tab:gpucb-params}.}

\subsection{Global optimisation benchmarks}
For each problem, all methods used 10 initial points uniformly sampled from the search space. As performance indicator, we measured the simple regret:
\begin{equation}
	\regret_\iterIdx^* := \min_{i\leq \iterIdx} \regret_i = \min_{i\leq\iterIdx} \objective(\location_i) -  \objective(\location^*)\,, \quad \iterIdx\geq 1\,,
\end{equation}
as the global minimum of each of the considered benchmark functions is known. All objective function evaluations were provided free of noise to the algorithms.

\begin{table}[t]
	\centering
	\begin{tabular}{c|c}
		\textbf{Parameter} & \textbf{Setting}\\
		\hline
		Optimiser & Adam\\
		Batch size & 64\\
		Steps & 100\textsuperscript{*}\\
		\hline
	\end{tabular}
	\caption{Stochastic gradient descent training settings for batch BORE. (*) For the Six-hump Camel problem, we applied 200 steps.}
	\label{tab:training}
\end{table}

\begin{table}[t]
	\centering
	\begin{tabular}{c|c}
		\textbf{Parameter} & \textbf{Setting} \\
		\hline
		Step size & 0.001\\
		Decay rate & 0.9\\
		Number of steps & $1000^*$\\
		\hline
	\end{tabular}
	\caption{SVGD settings for batch BORE. (*) For the Hartmann 3D problem, we used 500 steps.}
	\label{tab:svgd}
\end{table}

Batch BORE was run with a percentile $\percentile := 0.25$, which was applied to estimate the empirical quantile $\quantile$ at every iteration $\iterIdx\in\{1,\dots,\nIterations\}$. The method's classifier model was composed of a multilayer perceptron neural network model with 2 hidden layers of 32 units each, which was trained to minimise the binary cross-entropy loss. The activation function was set as the rectified linear unit (ReLU) with exception for the Hartmann 3D and the Six-hump Camel problem, which were run with an exponential linear unit (ELU), instead. Training for the neural networks was performed via stochastic gradient descent, whose settings are presented in \autoref{tab:training}. SVGD was run applying Adadelta to configure its steps according to the settings in \autoref{tab:svgd}. The SVGD kernel was set as the squared exponential (RBF) using the median trick to adjust its lengthscale \citep{Liu2016}.

LP-EI \citep{Gonzalez2016batch} was run using L-BFGS \citep{Byrd1995limited} to optimise its acquisition function. The optimisation settings were kept as the default for GPyOpt \citep{GPyOpt2016}.

The $q$-EI method \citep{Snoek2012} was run using the BoTorch implementation \citep{Balandat2020botorch}. The acquisition function was optimised via multi-start optimisation with L-BFGS \citep{Byrd1995limited} using 10 random restarts. Monte Carlo integration for $q$-EI used 256 samples.


\subsection{Comparisons on real-data benchmarks}
We here present experiments comparing the sequential version of BORE++ against BORE and other baselines, including traditional BO methods, such as GP-UCB and GP-EI \citep{Shahriari2016}, the Tree-structured Parzen Estimator (TPE) \citep{bergstra2011algorithms}, and random search, on real-data benchmarks. In particular, we assessed the algorithms on some of the same benchmarks present in the original BORE paper \citep{tiao2021bore}.

\subsubsection{Algorithm settings}
All versions of BORE were set with $\gamma := 0.25$. The original BORE algorithm used a 2-layer, 32-unit fully connected neural network as a classifier. The network was trained via stochastic gradient descent using Adam \citep{Kingma2015}. As in the other experiments in this paper, we followed the same scheme that keeps the number of gradient steps per epoch fixed \citep[see][Appendix J.3]{tiao2021bore}, set in our case as 100, and a mini-batch of size 64. The probabilistic least-squares version of BORE and BORE++ were configured with a GP classifier using the rational quadratic kernel \citep[Ch. 4]{Rasmussen2006} with output scaling and independent length scales per input dimension. All GP-based algorithms used the same type of kernel. GP hyper-parameters were estimated by maximising the GP's marginal likelihood at each iteration using BoTorch's hyper-parameter estimation methods, which apply L-BFGS by default \citep{Balandat2020botorch}. BORE++ was set with a fixed value for its parameter $\beta_\iterIdx := 3$, the regularisation factor was set as $\regFactor:=0.025$. Acquisition function optimisation was run for 500 to 1000 iterations via L-BFGS with multiple restarts using SciPy's toolkit \citep{SciPy2020}. Lastly, for the experiment with the MNIST dataset, we also used the Tree-structured Parzen Estimator (TPE) by \citet{bergstra2011algorithms} set with default settings from the HyperOpt package. All algorithms were run for a given number of independent trials and results are presented with their 95\% confidence intervals\footnote{Confidence intervals are calculated via linear interpolation when the number of trials is small.}

\subsubsection{Benchmarks}
\paragraph{Neural network hyper-parameter tuning.}
We first considered two of the neural network (NN) tuning problems found in \citet{tiao2021bore}, where a two-layer feed-forward NN is trained for regression. The NN is trained for 100 epochs with the ADAM optimizer \citep{Kingma2015}, and the objective is the validation mean-squared error (MSE). The hyper-parameters are the initial learning rate, learning rate schedule, batch size, along with the layer-specific widths, activations, and dropout rates. In particular, we considered Parkinson's telemonitoring \citep{Tsanas2009Parkinsons} and the CT slice localisation \citep{Graf2011CTslice} datasets, available at UCI's machine learning repository \citep{Dua2019}, and utilize HPOBench \citep{Eggensperger2021hpobench}, which tabulates, for each dataset, the MSEs resulting from all possible (62,208) configurations. The datasets and code are publicly available\footnote{Tabular benchmarks: \url{https://github.com/automl/nas_benchmarks}}. Each algorithm was run for 500 iterations across 10 independent trials.

\paragraph{Racing line optimisation.} We compare the different versions of BORE against a random search baseline in the UC Berkeley racing line optimisation benchmark \citep{Liniger2015} using the code provided by \citet{JainRaceOpt2020}. The task consists of finding the optimal racing line across a given track by optimising the positions of a set of 10 waypoints on the Cartesian plane along the track's centre line which would reduce the lap time for a given car, resulting in a 20-dimensional problem. For this track, the car model is based on UC Berkeley's 1:10 scale miniature racing car open source model\footnote{Open source race car: \url{https://github.com/MPC-Berkeley/barc/tree/devel-ugo}}. Each algorithm was run for 50 iterations across 5 independent trials.

\paragraph{Neural architecture search.} Lastly, we compare all algorithms on a neural network architecture search problem. The task consists of optimising hyper-parameters which control the training process (initial learning rate, batch size, dropout, exponential decay factor for learning rate) and the architecture (number of layers and units per layer) of a feed forward neural network on the MNIST hand-written digits classification task \citep{Deng2012mnist}. The objective is to minimise the NN classification error. To allow for a wide range of hyper-parameter evaluations, this task uses a random forest surrogate trained with data obtained by training the actual NNs on MNIST \citep{Falkner18a}. For this experiment, each method was run for 200 iterations across 10 independent trials.

\section{Dimensionality effect on batch BORE vs. batch BORE++ performance}
\label{sec:extra-exp}

\begin{figure}
	\centering
	\subfloat[3D]{\includegraphics[width=0.33\linewidth]{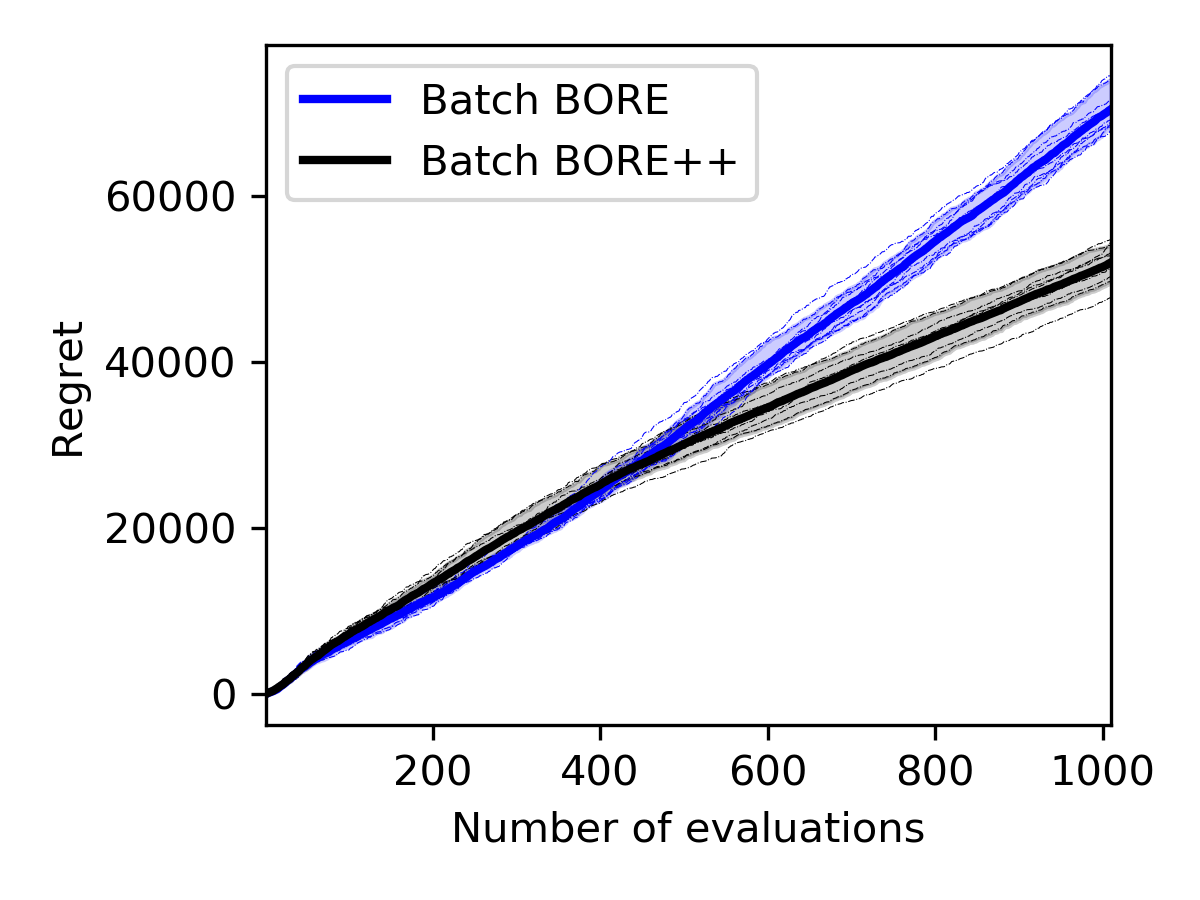}}
	\subfloat[4D]{\includegraphics[width=0.33\linewidth]{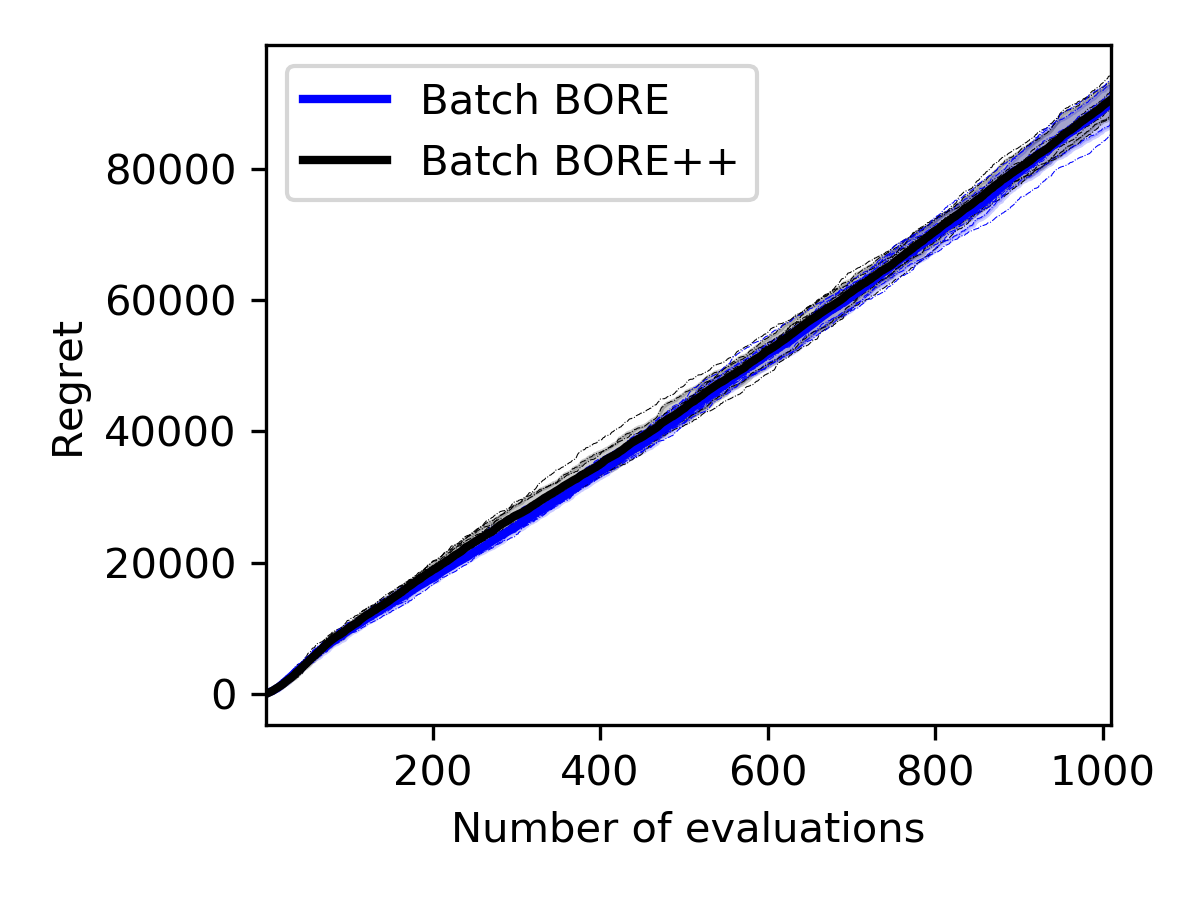}}
	\subfloat[5D]{\includegraphics[width=0.33\linewidth]{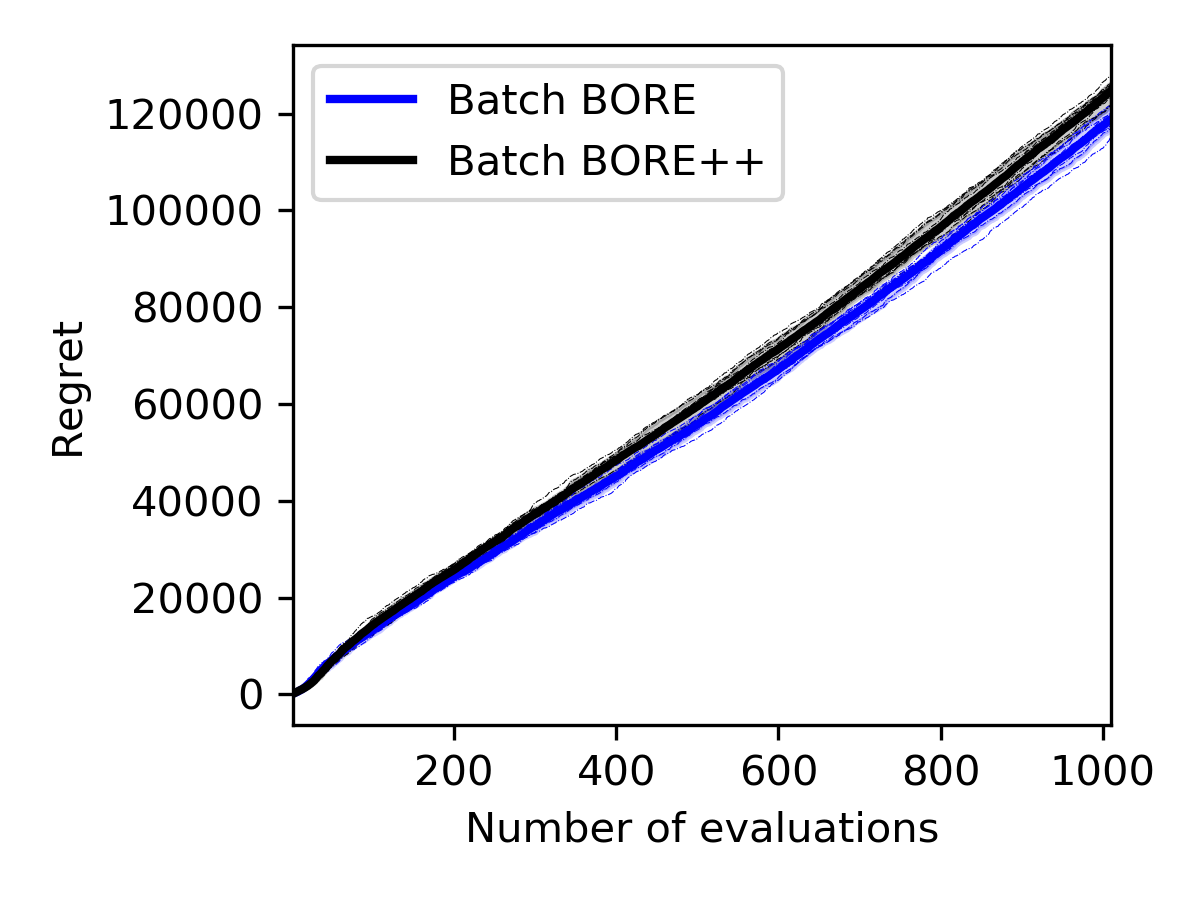}}
	\caption{BORE vs. BORE++ in the batch setting tested on the Rosenbrock function at varying search space dimensionalities. The plots compare the cumulative regret of each algorithm averaged over 10 runs. Shaded areas correspond to the 95\% confidence interval.}
	\label{fig:batch-comparison}
\end{figure}

We compared batch BORE against the batch BORE++ algorithm on a synthetic optimisation problem with the Rosenbrock function. The dimensionality of the search space was varied. The cumulative regret curves for each algorithm are presented in \autoref{fig:batch-comparison}.

Both algorithms were configured with a Bayesian logistic regression classifier applying random Fourier features \citep{Rahimi2007} as feature maps based on the squared-exponential kernel. The number of features was set as 300, and the classifier was trained via expectation maximisation. Observations were corrupted by additive Gaussian noise with zero mean and a small noise variance $\sigma_\obsNoise^2 = 10^{-4}$, and each model was set accordingly. To demonstrate the practicality of the method, the UCB parameter for BORE++ was fixed at $\beta_\iterIdx := 3$ across all iterations $\iterIdx\geq 1$, instead of applying the theoretical setup. SVGD was configured as its second-order version \citep{Detommaso2018} applying L-BFGS to adjust its steps \citep{Byrd1995limited}.

As the results show in \autoref{fig:batch-comparison}, batch BORE++ has a clear advantage over batch BORE in low dimensions. However, the performance gains become less obvious at higher dimensionalities and eventually deteriorate. One of the factors explaining this behaviour is that, as the dimensionality increases, uncertainty estimates become less useful. Distances between data points largely increase and affect the posterior variance estimates provided by translation-invariant kernels, such as the squared-exponential kernel our feature maps were based on. Other classification models may lead to different behaviours, and their investigation is left for future work.

\end{document}